\useunder{\uline}{\ul}{}
\pgfplotsset{compat=1.6}
\titleformat{\section}[block]{\scshape\centering}{\thesection.}{1em}{} 
\titleformat{\subsection}[block]{\normalsize}{\thesubsection.}{1em}{} 
\newcommand{\jeff}[1]{\underline{\textbf{#1}}}
\newtheorem{theorem}{Theorem}[section]
\newtheorem{lemma}[theorem]{Lemma}
\theoremstyle{definition}
\newtheorem{definition}{Definition}[section]
\theoremstyle{remark}
\title{\vspace{-10mm}\fontsize{24pt}{12pt}\selectfont AI Reasoning Systems: PAC and Applied Methods} 
\author[1]{Jeffrey Cheng}
\begin{document}

\maketitle 

\thispagestyle{fancy} 

\tableofcontents

\begin{abstract}
    Learning and logic are distinct and remarkable approaches to prediction.  Machine learning has experienced a surge in popularity because it is robust to noise and achieves high performance; however, ML experiences many issues with knowledge transfer and extrapolation.
    
    In contrast, logic is easily intepreted, and logical rules are easy to chain and transfer between systems; however, inductive logic is brittle to noise.  We then explore the premise of combining learning with inductive logic into AI Reasoning Systems.
    
    Specifically, we summarize findings from PAC learning (conceptual graphs, robust logics, knowledge infusion) and deep learning (DSRL, $\partial$ILP, DeepLogic) by reproducing proofs of tractability, presenting algorithms in pseudocode, highlighting results, and synthesizing between fields.  We conclude with suggestions for integrated models by combining the modules listed above and with a list of unsolved (likely intractable) problems.
\end{abstract}

\newpage
\section{Introduction}
\subsection{Motivation}

Deep learning has been wildly successful in many domains within NLP, computer vision, and generative models.  However, there is no shortage of opportunities for deep learning to improve: in particular, 6 issues are recurringly identified in the literature.

\begin{enumerate}
    \item \jeff{Deep learning requires too much data.}
    
    Deep learning cannot learn from abstract definitions and requires tens of thousands of examples (humans are far more example-efficient with complex rules). \cite{lake-human}  Geoffrey Hinton has expressed concerns that CNNs face "exponential inefficiencies." \cite{sabour-capsule}
    \item \jeff{Deep learning has no interface for knowledge transfer.}
    
    There is no clean way to encode and transfer patterns learned by one neural net to another, even if their problem domains are very similar.
    
    \item \jeff{Deep learning handles non-flat (i.e. hierarchical) data poorly.}
    
    Neural nets take as input a flat vector.  Experts must specially encode tree, graph, or hierarchical data or (more likely) abandon deep learning.
    \item \jeff{Deep learning cannot make open inference about the world.}
    
    Neural nets perform poorly on extrapolation and multi-hop inference.
    \item \jeff{Deep learning integrates poorly with outside knowledge (i.e. priors).}
    
    Encoding priors (e.g. objects in images are shift-invariant) requires re-architecting the neural net (e.g. convolutional structure).
    \item \jeff{Deep learning is insufficiently interpretable.}
    
    The millions of black-box parameters are not human-readable, and it is difficult to ascertain what patterns are actually learned by the algorithm.
\end{enumerate}

\begin{table}[H]
\centering
\caption{Authors cited in this review who have identified these 6 issues.}
\begin{tabular}{l|l|l|l|l|l|l}
                          & Marcus \cite{marcus-appraisal}       & Bottou \cite{bottou-reasoning}       & Garnelo \cite{garnelo-symbolic} & Evans \cite{evans-explanatory} & Zhang \cite{zhang-like-humans} & Valiant \cite{valiant-robust}      \\ \hline
Needs too much data    & $\checkmark$ & $\checkmark$ & $\checkmark$   & $\checkmark$ &              &              \\ \hline
Can't transfer-learn  & $\checkmark$ & $\checkmark$ &                & $\checkmark$ &              &              \\ \hline
Can't encode hierarchy    & $\checkmark$ & $\checkmark$ &                &              & $\checkmark$ &              \\ \hline
Can't openly infer & $\checkmark$ &              &                &              &              & $\checkmark$ \\ \hline
Can't integrate priors    & $\checkmark$ & $\checkmark$ & $\checkmark$   &              &              &              \\ \hline
Uninterpretable           &              &              & $\checkmark$   & $\checkmark$ & $\checkmark$ &              \\
\end{tabular}
\end{table}

The first insight by Valiant is that inductive logic has none of these issues: logic is data-efficient, has simple rules for encoding/synthesis/structure, easily chains to make inferences, and has a human-readable symbology.  By combining logic and learning methods, one might achieve both the impressive, noise-tolerant performance of deep learning and the ease-of-use of inductive logic.  In this review, we call such a system an artificial intelligence reasoning system.

This literature review serves 3 primary purposes.
\begin{itemize}
    \item Summarize the findings from logic, mathematics, statistics, and machine learning on artificial reasoning systems.
    \item Rephrase the findings from disparate fields in the same terminology.
    \item Synthesize findings: find areas of useful integration and identify next steps.
\end{itemize}

\subsection{Terminology}

Since findings related to AI reasoning systems come from so many disparate fields, proofs have been written in many notations (e.g. in query form \cite{yang-diff-kb} and in IQEs \cite{valiant-robust}) -- we will rephrase all logic with  inductive logic programming (ILP) notation.
\begin{table}[H]
\centering
\caption{Terminology in the ILP framework}
\label{my-label}
\def\arraystretch{1.5}
\begin{tabular}{|l|l|}
\hline
Name            & Notation                                                                                                                                                                                                                                                       \\ \hline
Atom            & $\alpha\in A$                                                                                                          \\ \hline
Ground Atom     & $g\in G$                                                                                                              \\ \hline
Definite clause & $r=\left(\alpha\leftarrow \alpha_1,...,\alpha_m\right)$                                                                                                                                          \\ \hline
Consequence     & $C=\bigcup_{i=1}^\infty C_i$, \newline$C_i=C_{i-1}\bigcup \{\gamma \mid \gamma \leftarrow \gamma_1,...,\gamma_m\in C_{i-1}\}$                                                                        \\ \hline
ILP Problem     & $I=(G, P, N)$                                                                                                                                                            \\ \hline
ILP Solution    & $R\text{ s.t. }B, R\models \gamma\iff \gamma\in P\setminus N$                                                                    \\ \hline
Knowledge Base    & $KB$                                                                   \\ \hline
\end{tabular}
\end{table}                                                                     

\theoremstyle{definition}
\begin{definition}{(Atom)}
An $n$-ary predicate $p(t_1,...,t_n)$ where each term $t_i$ is a variable or constant.  Typically the truth value of a relation on given args
\end{definition}

\theoremstyle{definition}
\begin{definition}{(Ground Atom)}
An atom with no variables                                                                                                                          
\end{definition}

\theoremstyle{definition}
\begin{definition}{(Definite clause)}
A rule where head $\alpha$ is implied by each body atom $\alpha_i$ being true 
\end{definition}

\theoremstyle{definition}
\begin{definition}{(Consequence)}
The closure over rules $R$ and ground atoms $G$.
\end{definition}

\theoremstyle{definition}
\begin{definition}{(ILP Problem)}
A 3-tuple of a set of ground atoms, a set of positive examples, and a set of negative examples. 
\end{definition}

\theoremstyle{definition}
\begin{definition}{(ILP Solution)}
A set of definite clauses that includes all positive examples in its consequences and no negative examples. 
\end{definition}

\theoremstyle{definition}
\begin{definition}{(Knowledge Base)}
A set of known or learned rules.
\end{definition}

In this review, all standard definitions of PAC learning from K\&V apply.  The term PAC-learning here means \textit{efficiently} PAC-learnable (i.e. an algorithm exists that accomplishes $\epsilon$-accuracy with confidence $1-\delta$ in $Poly\left(\frac{1}{\epsilon}, \frac{1}{\delta}\right)$ examples and polynomial time.

\newpage
\section{PAC Frameworks}
\subsection{Unconstrained Reasoning}

In order to motivate the ILP framework, we present first a negative result: learning a reasoning system is \textbf{intractable in the PAC setting} if the outputs of rules are relaxed from boolean outputs $\{0,1\}$ to arbitrary outputs.

\subsubsection{Learning Conceptual Graphs}

\theoremstyle{definition}
\begin{definition}{(Conceptual Graph)}
\textbf{As originally formulated}, a simple conceptual graph is a directed bipartite graph $G=(X, Y, E)$, where nodes $x\in X$ correspond to relations, nodes $y\in Y$ correspond to ``concepts,'' and edges symbolize an ordered relationship between a predicate and one of its arguments. \cite{croitoru-conceptual}  \textbf{An identical formulation in ILP vocabulary} is $G=(X,Y,E)$, where nodes in $X$ are head atoms, nodes in $Y$ are body atoms (relaxed from booleans to arbitrary objects), and the in-edges of a node represent a rule.
\end{definition}

\begin{figure}[H]
  \caption{An example of a simple conceptual graph}
  \centering
    \includegraphics[width=0.5\textwidth]{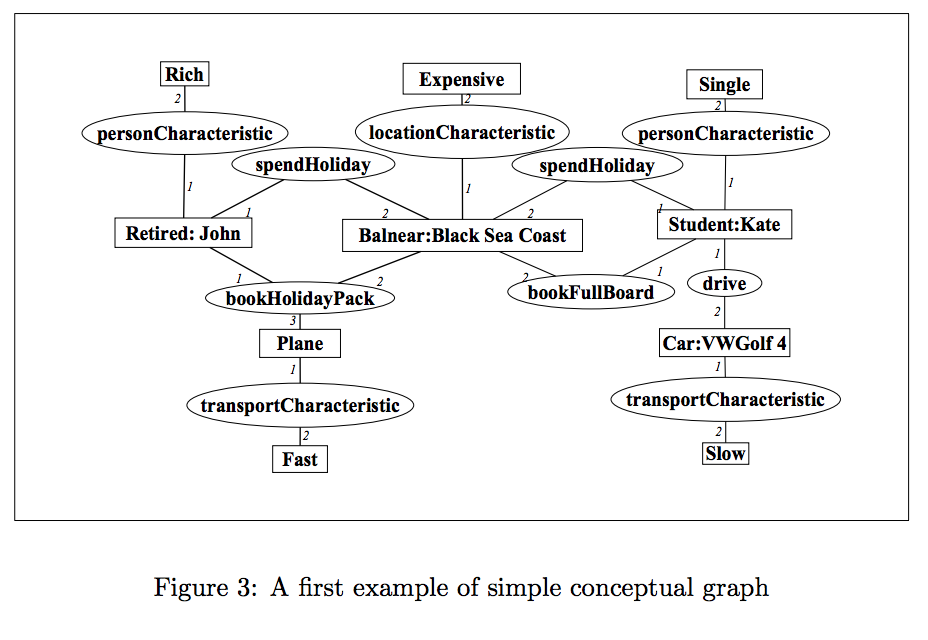}
\end{figure}

\theoremstyle{definition}
\begin{definition}{(Projection)}
Informally, if we are given conceptual graphs $G$ and $H$, then $G$ is more general than $H$ (denoted $G\geq H$, e.g. human $\geq$ student) iff there exists a projection from $G$ to $H$.  Formally, a projection from $G$ to $H$ is a mapping $\Pi: V_X(G)\cup V_Y(G)\rightarrow V_X(H)\cup V_Y(H)$ such that:
\begin{align*}
    &\Pi(V_X(G))\subseteq V_X(H)\text{ and }\Pi(V_Y(G))\subseteq V_Y(H)\\
    &\forall x\in V_X(G), \forall y\in V_Y(G), x\in N_G^i(y)\implies \Pi(x)\in N_H^i\left(\Pi(y)\right)\tag{$G$ dominates $H$ in distance}\\
    &\forall v\in V_X(G)\cup V_Y(G), E(H)\subseteq E(G)\tag{\cite{croitoru-conceptual}}\\
\end{align*}
\end{definition}

We present a simple intermediate result due to Baget and Mugnier \cite{baget-np}.
\begin{lemma}
Deciding whether $G\geq H$ is an NP-complete problem.
\end{lemma}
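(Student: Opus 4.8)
\noindent\emph{Proof proposal.} The plan is the usual two-part recipe for NP-completeness, with the hardness half obtained by a reduction from graph homomorphism (equivalently from $k$-\textsc{Coloring} or \textsc{Clique}, which are special cases).

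First I would establish membership in NP. A projection $\Pi$ is a single map on the finite vertex set $V_X(G)\cup V_Y(G)$, so it is a certificate of size polynomial in $|G|$. Given such a candidate, checking the three defining conditions --- that $\Pi$ respects the bipartition $V_X/V_Y$, that it is distance-dominating (for all relation nodes $x$, concept nodes $y$, and indices $i$, if $x\in N_G^i(y)$ then $\Pi(x)\in N_H^i(\Pi(y))$), and that the edge-containment clause holds --- amounts to verifying a polynomial number of local constraints over tuples of nodes and edges, so it runs in $\mathrm{poly}(|G|,|H|)$ time. Hence $\{(G,H) : G\ge H\}\in\mathrm{NP}$.

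Next I would prove NP-hardness by encoding directed-graph homomorphism. Given directed graphs $D_1=(V_1,A_1)$ and $D_2=(V_2,A_2)$, build conceptual graphs $\widehat{D_1},\widehat{D_2}$ in which the concept nodes are the vertices of $D_i$ and each arc $(u,v)\in A_i$ becomes one relation node $r_{uv}$ of a single fixed binary predicate, joined to $u$ by an edge of index $1$ and to $v$ by an edge of index $2$. The bipartition condition forces any projection $\Pi:\widehat{D_1}\to\widehat{D_2}$ to send concept nodes to concept nodes and relation nodes to relation nodes, and the index-preservation (distance-domination) condition makes $r_{uv}$ mappable exactly onto those arc nodes $r_{pq}$ of $\widehat{D_2}$ with $p=\Pi(u)$ and $q=\Pi(v)$; that is, $\widehat{D_1}\ge\widehat{D_2}$ iff there is a homomorphism $D_1\to D_2$. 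Since the construction is clearly polynomial-time and (directed) graph homomorphism is NP-complete, so is deciding $G\ge H$. (A shortcut is to take $D_1$ to be a fixed gadget such as a complete graph, reading off $k$-\textsc{Coloring} or \textsc{Clique} directly.)

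The main obstacle I anticipate is making the hardness reduction airtight against the precise wording of the projection axioms: one must check that those axioms collapse to ``homomorphism'' on these gadgets and nothing weaker or stronger --- e.g.\ that no projection can send an arc node to a non-arc node, identify the two distinct in-edges of a relation node, or be blocked by the edge-containment clause --- and one must keep all relation nodes at the same arity so that $\widehat{D_i}$ is a legitimate \emph{simple} conceptual graph. Degenerate inputs (isolated vertices, loops, parallel arcs) need a quick separate check so the equivalence has no exceptions. Should the raw axioms prove slightly too permissive, the standard remedy is to attach distinct pendant ``color'' relations to the concept and relation nodes to pin down the intended correspondence; this preserves polynomiality.
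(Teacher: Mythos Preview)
Your proposal is correct and follows essentially the same route as the paper: NP-hardness via a reduction from graph homomorphism, with \textsc{Clique}/\textsc{$k$-Coloring} as the concrete special case (the paper takes $H$ to be a $k$-clique and invokes the edge-preservation clause directly). You are in fact more thorough than the paper, which omits the NP-membership half and does not spell out the bipartite encoding; your extra care about the gadget respecting the $V_X/V_Y$ partition and arity is well placed but does not change the underlying argument.
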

 
\begin{proof}
The last condition alone suffices for the proof.  We want to find a mapping $\Pi: V(G)\rightarrow V(H)$ such that $(a,b)\in E(G)\implies \left(\Pi(a), \Pi(b)\right)$.

Consider the special case where $H$ is a graph where the only edges form a $k$-clique.  In this special case, we want to decide whether the edges of $G$ have some mapping to a $k$-clique.  This is now the decision problem for the $k$-clique problem, which is known to be NP-complete.

Since the projection problem is reduced to an NP-complete problem in this special case, we conclude that the projection problem is NP-complete as well.
\end{proof}

We now arrive at the desired negative result due to Jappy and Nock. \cite{jappy-conceptual}
\begin{theorem}
Let $CG$ be the set of conceptual graphs, and let $C\subseteq CG$ be a concept class of such graphs.  $C$ is not PAC-learnable unless $NP\subseteq P/Poly$.
\end{theorem}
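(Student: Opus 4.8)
The plan is to connect PAC-learnability of $C$ to the complexity of the \emph{consistency problem} for conceptual graphs, and then to exploit the NP-completeness of projection (the preceding lemma) to obtain the stated conditional lower bound.

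First, I would invoke the standard reduction of Pitt and Valiant (cf.\ Kearns and Vazirani): if $C$ is (properly) PAC-learnable, then there is a randomized polynomial-time algorithm that, given a labeled sample $S=\{(H_i,b_i)\}_{i=1}^m$ of conceptual graphs realizable by some target in $C$, outputs a member of $C$ consistent with $S$. The trick is to run the assumed learner against the uniform distribution on $S$ with accuracy parameter $\epsilon<1/m$ and confidence $\delta=1/2$: every sample point then carries probability mass exceeding $\epsilon$, so any hypothesis of error below $\epsilon$ must agree with all of $S$, and independent repetition drives the failure probability down. Thus PAC-learnability of $C$ puts the $C$-consistency problem in $RP$.

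Second --- and this is the crux --- I would show that the $C$-consistency problem is NP-hard. Recall that a conceptual graph $c$ accepts an instance graph $H$ precisely when $c\geq H$, i.e.\ when a projection $c\to H$ exists, the relation shown NP-complete above. Building on that reduction (via CLIQUE), I would encode an arbitrary instance of an NP-complete problem as a polynomial-size labeled sample: one positive example $H^{+}$ forcing any consistent $c$ to be a generalization of $H^{+}$, together with a family of negative examples $H^{-}_1,\dots,H^{-}_k$ onto which $c$ must fail to project, arranged so that a consistent $c\in C$ exists if and only if the original instance is a yes-instance. I expect this construction to be the main obstacle: one must keep the sample polynomial, ensure the target graph witnessing a yes-instance actually lies in the concept class $C$ under consideration (so $C$ must be taken expressive enough, e.g.\ all conceptual graphs of size polynomial in the instances), and verify that ``some generalization of $H^{+}$ avoids all the $H^{-}_j$'' is a faithful restatement of the combinatorial constraint rather than an over- or under-approximation.

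Finally, I would put the two halves together. If $C$ were PAC-learnable, then by the first step the NP-hard $C$-consistency problem lies in $RP$, hence $NP\subseteq RP\subseteq BPP\subseteq P/Poly$ by Adleman's theorem --- equivalently, hardwiring one ``good'' string of the learner's coin flips as non-uniform advice collapses the randomized consistency test to a polynomial-size circuit family. Therefore $C$ is not PAC-learnable unless $NP\subseteq P/Poly$. I would close with a remark that this argument is cleanest for proper learning; for improper learning one additionally needs the learner's hypothesis representation to be polynomially evaluable, a requirement that is itself delicate here precisely because evaluating $c\geq H$ is NP-hard.
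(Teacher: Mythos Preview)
Your route differs from the paper's. You go through Pitt--Valiant: PAC-learnability $\Rightarrow$ consistency in $RP$, then $NP\subseteq RP\subseteq BPP\subseteq P/Poly$ via Adleman, with the NP-hardness of the consistency problem as the load-bearing step --- a step you explicitly flag as the main obstacle and leave as a sketch. The paper instead invokes Schapire's circuit-size theorem from \emph{The Strength of Weak Learnability}: if a class is PAC-learnable, then every concept in it is computed by a polynomial-size circuit. Since membership in the derived class $D=\{\inf(c):c\in C\}$ is a special case of the projection problem and hence NP-complete by the preceding lemma, learnability of $D$ would place an NP-complete predicate in $P/Poly$ directly. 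No consistency construction, no labeled-sample gadgetry, no Adleman.

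So your ``main obstacle'' is not a detail to be filled in later but precisely the combinatorial work the paper's argument avoids altogether. More tellingly, your own closing caveat --- that the argument is cleanest for proper learning because evaluating $c\geq H$ is itself NP-hard --- is exactly the observation the paper turns into the whole proof. Rather than treating NP-hard evaluability as a nuisance threatening your consistency route, Schapire's theorem lets one make it the argument: learnable $\Rightarrow$ small circuits for evaluation $\Rightarrow$ $NP\subseteq P/Poly$, and one is done without ever building a sample or invoking $RP$.
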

\begin{proof}
Let $d=inf(c)$ denote the most specific conceptual graph that has the same representation as $c$.  Let $D=\{inf(c) \forall c\in C\}$.

Deciding whether $inf(x)$ is in $D$ for arbitrary $x\in C$ is a special case of the projection problem.  By Lemma 2.1, testing membership for $inf(x)\in D$ is NP-complete.

We cite Theorem 7 from Schapire's \textit{The strength in weak learning} \cite{schapire-weak}:
\begin{quote}
Suppose $C$ is learnable, and assume that $X_n=\{0,1\}^n$.  Then there exists a
polynomial $p$ such that for all concepts $c\in C_n$ of size $s$, there exists a circuit of size $p(n, s)$ exactly computing $c$.
\end{quote}
We then have that if $D$ is PAC-learnable, then learning $D$ can be computed in $P/Poly$.  The contrapositive of this theorem is that if learning $D$ is not $ P/Poly$, then $D$ is not PAC-learnable.

We know that $D$ is NP-complete and take as assumption that $NP\neq P/Poly$.  

Therefore, $D$ (and hence $C$) is not PAC-learnable.
\end{proof}

The consequence of this negative result is that all other discussion will be about constrained reasoning: either reasoning about small sets of objects/relations or reasoning about arbitrary sets of objects/relations with rules of constrained form.
\subsection{Constrained Reasoning}
\subsubsection{Learning to Reason}

Consider the problem of finding a satisfying assignment given $W\models \alpha$, where $W$ is a boolean function and $\alpha$ is a CNF.

It is well-known that this problem is NP-Hard.

\theoremstyle{definition}
\begin{definition}{(Reasoning query oracle)}
Define reasoning query oracle $RQ(A, f ,Q)$ as an oracle that picks an arbitrary query $\alpha\in Q$, then checks whether learning agent $A$ is correct about its belief of $f\models \alpha$.  If $A$ is correct, $RQ$ returns True; else, $RQ$ returns a counterexample.
\end{definition}

\theoremstyle{definition}
\begin{definition}{(Exact Learn-to-Reason)}
Let $F$ be a class of boolean functions, and let $Q$ be a query language.  An algorithm $A$ is an exact Learn-to-Reason algorithm for the problem $(F,Q)$ iff $A$ learns in polynomial time with a reasoning query oracle, and after the learning period, $A$ answers True to query $\alpha$ iff $f\models \alpha$. 
\end{definition}

If we are given that $W$ has a polynomial size DNF, $\alpha$ has at most $\log n$ literals in each clause, and access to an equivalence query oracle, an example oracle, and a reasoning query oracle, then \textbf{we can find an exact reasoning solution to the NP-Hard CNF problem}.

We present an exact learn-to-reason algorithm for learning some boolean function $f$ with polynomial-size DNF over basis $B$.

\begin{algorithm}[H]
\caption{Learn-to-Reason Training Algorithm}\label{alg:l2r-train}
\begin{algorithmic}[1]
\Procedure{Learn-to-Reason-Train}{$B, F, RQ$}\Comment{Learning CNFs via DNF from Below}
\State{For all $b\in B$, $\Gamma_b\gets\varnothing$}
\While{Grace Period}
    \If{$\neg RQ(\Gamma, F, Q)$}
        \State Let $x$ be the returned counterexample.
        \State Iterate through $B$ until encountering some $b\in B$ s.t. $x\in M_b(\Gamma_b)$.
        \State $\Gamma_b\gets Gamma_b\cup x$
        \State Repeatedly call $MQ(f)$ to find a new minimal model of $f$ w.r.t. $b$.
    \EndIf
\EndWhile
\EndProcedure
\end{algorithmic}
\end{algorithm}

The answering algorithm follows given input query $\alpha$ and $\Gamma=\bigcup_{b\in B} \Gamma_b$ from the learning algorithm; simply evaluate the CNF $\Gamma$ on the boolean vector $\alpha$.

\begin{theorem}
The Learn-to-Reason Training Algorithm is mistake-bound, learns $f$ from below, and is an exact Learn-to-Reason algorithm.
\end{theorem}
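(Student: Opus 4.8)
The plan is to establish the three assertions in turn, with Bshouty's \emph{monotone theory} of Boolean functions as the workhorse. Fix notation: for a reference assignment $b\in\{0,1\}^n$ and the partial order $x\le_b y\iff\{i:x_i\neq b_i\}\subseteq\{i:y_i\neq b_i\}$, let $M_b(f)$ be the set of $\le_b$-minimal models of $f$, let $T_z^b$ be the term whose set of models is exactly $\{w:z\le_b w\}$, and let $f_b=\bigvee_{z\in M_b(f)}T_z^b$ be the least $b$-monotone function above $f$. Bshouty's theorem then gives $f=\bigwedge_{b\in B}f_b$ as soon as $B$ contains the essential reference set of $f$, and the hypothesis the algorithm maintains is exactly $\Gamma=\bigwedge_{b\in B}\bigl(\bigvee_{z\in\Gamma_b}T_z^b\bigr)$ with each $\Gamma_b\subseteq M_b(f)$.

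\emph{Learning from below.} I would first record the invariant $\Gamma_b\subseteq M_b(f)$ for every $b$: it holds at initialization, where every $\Gamma_b=\varnothing$, and the update step (lines 6--8) only ever inserts an assignment that the membership-query search has certified to be a $\le_b$-minimal model of $f$. Since dropping disjuncts only shrinks a function, $\bigvee_{z\in\Gamma_b}T_z^b\le f_b$, and therefore $\Gamma\le\bigwedge_{b\in B}f_b=f$, provided $B$ is a complete reference set for $f$. Hence at every moment $\Gamma\models f$: the learner never accepts a non-model of $f$, i.e. it approaches $f$ strictly from below.

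\emph{Mistake bound.} Next I would argue that every counterexample $RQ$ returns is a positive example of $f$ that $\Gamma$ misses. When the learner answers ``$f\not\models\alpha$'' it does so because $\Gamma\not\models\alpha$, so some model of $\Gamma$ falsifies $\alpha$; by the invariant that model is also a model of $f$, so $f\not\models\alpha$ and the answer was right — no counterexample can be generated this way. Thus each counterexample $x$ has $f(x)=1$ but $\Gamma(x)=0$, whence some $\bigvee_{z\in\Gamma_b}T_z^b$ fails at $x$; the search then yields a $\le_b$-minimal model $z\le_b x$ of $f$ with $z\notin\Gamma_b$ (if $z$ were already present, $T_z^b$, and hence that disjunction, would hold at $x$). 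So every mistake strictly enlarges some $\Gamma_b$; since each term of a DNF for $f$ contributes at most one member to $M_b(f)$, we get $|\Gamma_b|\le|M_b(f)|\le s$ with $s$ the DNF-size of $f$, and the grace period lasts at most $|B|\cdot s$ rounds.

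\emph{Exactness, and the hard part.} Once the grace period ends, $RQ$ never again returns a counterexample, so the learner's belief $[\Gamma\models\alpha]$ coincides with $[f\models\alpha]$ for \emph{every} $\alpha\in Q$; and the answering routine computes $[\Gamma\models\alpha]$ in polynomial time, since deciding whether the small conjunction-of-DNFs $\Gamma$ entails a clause of at most $\log n$ literals breaks into polynomially many elementary checks — which is precisely the definition of an exact Learn-to-Reason algorithm. The real obstacle is not any one of these steps but making the bound $|B|\cdot s$ genuinely \emph{polynomial}: this forces $B$ to be at once \emph{complete} for $f$ (so that $\bigwedge_{b\in B}f_b=f$ and the ``from below'' direction does not stall short of $f$) and of polynomial size. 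Both can be arranged here only because $Q$ is restricted to clauses of at most $\log n$ literals — one must show, via the monotone theory, that every obstruction an $RQ$ query can witness is already exposed by pinning $O(\log n)$ coordinates, so that a manageable family of reference assignments suffices; reconciling this with the fact that $f$'s CNF may be exponentially larger than its DNF is where the argument has to be careful.
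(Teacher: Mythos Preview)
Your proposal is correct and follows essentially the same route as the paper: both rely on the Bshouty/Khardon--Roth monotone-theory decomposition $f=\bigwedge_{b\in B}f_b$, maintain the invariant $\Gamma_b\subseteq M_b(f)$ to get ``from below,'' observe that errors can occur only on positive counterexamples and each strictly grows some $\Gamma_b$ to get the mistake bound, and conclude exactness once the bound is exhausted. Your write-up is in fact considerably more explicit than the paper's---the paper simply cites two Khardon--Roth theorems for the least-upper-bound and entailment-via-minimal-models facts and asserts the polynomial bound, whereas you unpack the mechanism and correctly flag that the genuinely delicate point is arranging a basis $B$ that is simultaneously complete for $f$ and of polynomial size, which is where the $\log n$-literal restriction on $Q$ enters.
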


\begin{proof}

Denote $h=\bigwedge_{b\in B} \left(\bigvee_{z\in Gamma_b} M_b(z)\right)$

We begin by noting that the algorithm never makes a mistake when it returns False since $\Gamma\subseteq f$.  Thus, it makes mistakes only on positive counterexamples; since $x\in f\setminus h$ is bounded (in fact, polynomial), the number of mistakes is bounded.

We cite 2 theorems by Khardon and Roth \cite{khardon-l2r}:

\begin{enumerate}
    \item For boolean function class $F$ and query language $Q$ with basis $B$, the least upper bound $f$ is well-defined as $f_{lub}=\bigwedge_{b\in B} M_b(f)$.
    \item Given $KB\in F$, $\alpha\in Q$, and basis $B$, $KB\models \alpha$ iff $\forall u\in \Gamma_KB^B$, $\alpha(u)=1$.
\end{enumerate}

From the former, we have that $h$ is the least upper bound on $f$ after $|\Gamma_f^B|$ calls to the oracle $RQ(f,Q)$ by definition of a least upper bound.

The postcondition of the latter is satisfied by the algorithm, so we conclude that the output of the algorithm $h=f_{lub}$ is a solution to the learn-to-reason version of the CNF problem.
\end{proof}

Thus, this algorithm learns to reason about CNF from a knowledge base of counterexamples efficiently. \cite{khardon-l2r}  We note that this has not solved an NP-hard problem in the traditional sense; the additional reasoning power comes from the oracle $RQ$ and its ability to provide counterexamples in constant time.  In fact, the problem is still traditionally intractable with our additional restriction on the DNF size; it is NP-hard to find a satisfying assignment for a CNF even given that there exists only one satisfying assignment. \cite{valiant-np}

Finally, we note that there is an extension from exact learning-to-reason to the PAC setting, as defined by Khardon and Roth.

\theoremstyle{definition}
\begin{definition}{(Fair)}
Given boolean function $f$, query $\alpha\in Q$, and $\epsilon>0$, query $\alpha$ is $(f,\epsilon)$-fair if $\Pr_D[f\setminus \alpha]=0$ for $\Pr_D[f\setminus\alpha]>\epsilon$; either the query does not occur (and thus no mistakes can be made over it) or it occurs frequently.
\end{definition}

\theoremstyle{definition}
\begin{definition}{(PAC Learn-to-Reason)}
Let $F$ be a class of boolean functions, and let $Q$ be a query language.  An algorithm $A$ is an exact Learn-to-Reason algorithm for the problem $(F,Q)$ iff $A$ is polynomial time in $n, \frac{1}{\epsilon}, \frac{1}{\delta}$ with a reasoning query oracle, and after the learning period, $A$ answers True to query $\alpha$ iff $f\models \alpha$ for a $(f,\epsilon)$-fair query with probability $1-\delta$. 
\end{definition}

This definition will serve as an inspiration for the next finding in Robust Logics, where the PAC-setting is critical to the tractability of the rule-learning.

\subsubsection{Robust Logics}

\begin{table}[H]
\centering
\caption{Valiant's definitions and their corresponding equivalents}
\label{my-label}
\begin{tabular}{|l|l|p{10cm}|}
\hline
Valiant             & ILP         & Notes                                                                                                                                                                                         \\ \hline
Object              & Variable    &                                                                                                                                                                                          \\ \hline
Token               & N/A         & A named reference to an object.,Valiant himself assumed distributional symmetry to avoid dealing with permutations of tokens w.r.t. objects, so we omit the term and concept altogether. \\ \hline
Relation            & Atom        &                                                                                                                                                                                          \\ \hline
Deduction           & Consequence &                                                                                                                                                                                          \\ \hline
Connective function & Predicate   & Valiant assumes these connective functions belong to a PAC-learnable class (e.g. linear threshold functions).                                                                                                                                                                                         \\ \hline
Rule                & Rule        & Valiant's rules have a slightly different structure: $r:= Q(A)\implies [f(R_{i_1}, ...,R_{i_l})\equiv R_{i_0}]$, where $Q$ is an arbitrary boolean function on the object set and $f$ is a connective function.                                                                 \\ \hline
\end{tabular}
\end{table}

\theoremstyle{definition}
\begin{definition}{(Scene)}
Given objects $A$, relations $R=\{R_1,...,R_t\}$, and arities $\alpha=\{\alpha_1, ..., \alpha_t\}$, a scene $\sigma$ is a vector of length $\sum_{i=1}^t n^{\alpha_i}$, where $\sigma[j]$ is the truth value of the $j$th relation.
\end{definition}

\theoremstyle{definition}
\begin{definition}{(Obscured Scene)}
An obscured scene $\sigma^\$$ is a scene whose vector elements can take on 3 values: $\{0, 1, \$\}$, where $\$$ denotes an obscured truth value. 
\end{definition}

In Valiant's setting of Robust Logics, we place constraints on the atoms themselves; if predicates are restricted to boolean functions that are PAC-learnable, then \textbf{any class of rules is PAC-learnable} from examples (even with partial knowledge).  We then present 2 algorithms:
\begin{itemize}
    \item A learning algorithm for inducing rules from examples.
    \item A deduction algorithm using learned rules to predict the truth values of obscured variables.
\end{itemize}

\paragraph{PAC-learnability from Scenes}

\begin{theorem}
If the class of connective functions $C^l$ is PAC-learnable to accuracy $\epsilon$ and confidence $1-\delta$ by algorithm $M$ in $L(l, \epsilon, \delta)$ examples, then any class of rules with constant arity over $C, R, A$ is PAC-learnable from scenes in $L(l, \epsilon, \delta)$ examples.
\end{theorem}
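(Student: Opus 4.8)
The plan is to exhibit a reduction that converts the task of learning a rule into the task of learning its connective function, so that the assumed learner $M$ can be invoked as a black box with no loss in sample complexity. Recall that a rule has the form $r := Q(A) \implies [f(R_{i_1},\dots,R_{i_l}) \equiv R_{i_0}]$, in which the template — the precondition $Q$, the input relations $R_{i_1},\dots,R_{i_l}$, and the target relation $R_{i_0}$ — is fixed, and the only unknown is the connective function $f \in C^l$. Fix the distribution $D$ over scenes against which we must be accurate. First I would define the induced example map: given a scene $\sigma$, pick a tuple of objects on which the precondition $Q$ holds (breaking ties by a fixed order, e.g.\ lexicographic on object indices), read off the bit vector $(\sigma[R_{i_1}],\dots,\sigma[R_{i_l}]) \in \{0,1\}^l$ as the instance and $\sigma[R_{i_0}] \in \{0,1\}$ as the label. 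This map pushes $D$ forward to a distribution $D'$ over labeled examples for $C^l$; scenes with no qualifying tuple may be discarded, since they place no constraint on the rule's behaviour.

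Next I would run $M$ on the $L(l,\epsilon,\delta)$ examples obtained by applying this map to $L(l,\epsilon,\delta)$ independent scenes drawn from $D$. The extracted examples are i.i.d.\ from $D'$, so by the PAC-learnability hypothesis $M$ returns a hypothesis $\hat f$ with $\Pr_{D'}[\hat f \neq f] \le \epsilon$ with probability at least $1-\delta$. Forming the hypothesis rule $\hat r$ by substituting $\hat f$ into the template, the key observation is that $\hat r$ is wrong on a scene $\sigma$ exactly when $\hat f$ is wrong on the example extracted from $\sigma$, so $\Pr_D[\hat r \text{ errs}] = \Pr_{D'}[\hat f \neq f] \le \epsilon$; thus $\hat r$ is $\epsilon$-accurate with confidence $1-\delta$ in the same budget $L(l,\epsilon,\delta)$. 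Because the arity is a constant, the number of object tuples scanned per scene is polynomial in $|A|$, the reduction runs in polynomial time, and $M$ is run on polynomially sized inputs, so the whole procedure is efficient. A class of rules is then handled by applying the reduction to each template in turn — or, since a single stream of scenes is shared, in parallel with a union bound over the finitely many templates (at the mild cost of rescaling $\delta$).

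The main obstacle is keeping the distribution bookkeeping honest: one must justify that selecting a single qualifying tuple per scene, rather than aggregating over all of them, yields the correct notion of rule error. This works precisely because, under the fixed tie-break, the rule's correctness on a scene is determined by that chosen tuple, so the pushforward $D'$ is exactly the distribution on which $\hat f$'s error is the quantity we need to control. A secondary wrinkle is partial knowledge: when a scene is obscured and may contain $\$$ entries, the extraction should emit an example only when all of $R_{i_1},\dots,R_{i_l},R_{i_0}$ are revealed for the chosen tuple, and one argues that obscuring can only suppress constraints, never manufacture spurious ones, so the accuracy guarantee survives on the revealed portion. Once the reduction is set up, the i.i.d.\ structure, the exact match of sample sizes, and the polynomial running time all follow routinely.
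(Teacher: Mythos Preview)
Your proposal is correct and follows essentially the same approach as the paper: extract labeled $\{0,1\}^l$ examples from scenes, invoke the assumed PAC learner $M$ to obtain $\hat f$, substitute $\hat f$ into the rule template, and observe that the rule's error over scenes coincides with the connective's error over the induced distribution. Your treatment is in fact more careful than the paper's (which calls the proof ``border[ing] on tautological'' and elides the tuple-selection and obscured-scene issues you flag), but the underlying reduction and the error-equality step are identical.
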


\begin{proof}
The proof borders on tautological.  We target a rule $q:= Q(A)\implies [f(R_{i_1}, ...,R_{i_l})\equiv R_{i_0}]$

Note that a particular scene $\sigma$ yields an input vector of $l$ truth values and an output boolean.  Suppose we sample $L(l, \epsilon, \delta)$ such scenes.  By assumption, algorithm $M$ can learn some function $f'\in C$ that is $\epsilon$-accurate with confidence $1-\delta$.

If the distribution of scenes is $D$, then the probabilistic guarantee is:
\begin{align*}
    er_D(f')&=\sum_{\sigma} D(\sigma)\left(err^+(f',\sigma) + err^-(f', \sigma)\right)\leq \epsilon
\end{align*}

We define a rule $q':=Q(A)\implies \left[f'(R_{i_1}, ...,R_{i_l})\equiv R_{i_0}\right]$ ($q$, but with $f$ substituted with $f'$).

\begin{align*}
    er_D(q')&=\sum_{\sigma} D(\sigma)\left(err^+(q',\sigma) + err^-(q', \sigma)\right)\\
    &=\sum_{\sigma} D(\sigma)\left(err^+(f',\sigma) + err^-(f', \sigma)\right)\leq \epsilon
\end{align*}

\end{proof}

\paragraph{Learning Algorithm}

We must be given some set of templates for rules of the form:
\begin{align*}
    r:=Q(A)\implies \left[f(R_{i_1},...,R_{i_l})\equiv R_{i_0}\right]
\end{align*}
where only $f$ is unknown, such as the following:
\begin{align*}
    \forall x_1, x_2, x_3 Q(x_1, x_2, x_4, \sigma)\implies \left[f(\exists y_1 R_1(x_1,x_2,y_1),\exists y_2 R_2(x_2,x_3,y_2)\equiv R_1(x_1,x_2,x_3)\right]
\end{align*}
Let $T$ be a set of such rule templates.  The algorithm constructs a training set for PAC-algorithm $M$ over the connective functions and runs it $|T|$ times over the rule templates.

\begin{algorithm}[H]
\caption{Robust Logics Learning Algorithm}\label{alg:robust-learn}
\begin{algorithmic}[1]
\Procedure{Learn-Rules}{$\{\sigma\}, M, T$}\Comment{PAC-algorithm for learning rules}
\State{$S\gets []$}\Comment{Initialize Rule Set}
\For{$t\in T$}
    \State{Let $f_t$ be template $t$'s connective function.}
    \State{Let $R^p_t$ be the relations in the predicate of $T$, and let $R^h_t$ be the head.}
    \State{$T\gets[]$}\Comment{Initialize training set for $M, t$}
    \State{$L\gets[]$}\Comment{Initialize labels for $M, t$}
    \For{$\sigma\in\{\sigma\}$}
        \State{Let $\sigma[R^p_t]$ be $\sigma$ sliced along elements in $R^p_t$}
        \State{$T\gets T + \sigma[R^p_t]$}\Comment{Append predicates to training set.}
        \State{$L\gets L + R^h_t$}\Comment{Append head to label set.}
    \EndFor
    \State{$S\gets S + M(T, L)$}\Comment{Append rule learned from $M$ to ruleset}
\EndFor
\Return $S$
\EndProcedure
\end{algorithmic}
\end{algorithm}

\paragraph{Deduction Algorithm}

The Deduction Algorithm is significantly more complex because the deduction can yield both ambiguities and contradictions, the former arising from insufficient information and the latter from errors in the rules (to magnitude $\epsilon$).  We construct a graph $G=(V,E)$, where the vertices $V$ are relations and edge $(v_1,v_2)\in E$ iff there is a rule with $v_1$ on the LHS and $v_2$ on the RHS.  We present an illustrative algorithm for the case in which $G$ is acyclic, then a general algorithm for arbitrary graphs $G$.

\begin{algorithm}[H]
\caption{Acyclic Deduction Algorithm}\label{alg:acyclic-deduction}
\begin{algorithmic}[1]
\Procedure{Acyclic-Deduction}{$G$, $S$}\Comment{Evaluates relations from acyclic rules graph.}
\State{Let $Z$ be the topologically sorted order of the relations.}
\For{$R\in Z$}
    \State{Let $S_R\subseteq S$ be the set of rules with $R$ as the head.}
    \If{$S_R=\varnothing$}\Comment{$R$ is a leaf.}
        \State{Assign 0/1 to $R$}
    \ElsIf{All of the outputs of $S_R$ are true.}
        \State{Assign 1 to $R$}
    \ElsIf{All of the outputs of $S_R$ are false.}
        \State{Assign 1 to $R$}
    \ElsIf{Two rules in $S_R$ lead to a contradiction}
        \State{Assign ? to $R$}
    \Else
        \State{Assign 0/1 to $R$}
    \EndIf
\EndFor
\Return
\EndProcedure
\end{algorithmic}
\end{algorithm}

The acyclic case is quite intuitive since there is an order of evaluation.  The general case has no such ordering and so must proceed for an indeterminate (but certainly finite) amount of time.

\begin{algorithm}[H]
\caption{General Deduction Algorithm}\label{alg:acyclic-deduction}
\begin{algorithmic}[1]
\Procedure{GeneralDeduction}{$G$, $S$}\Comment{Evaluates relations from any rules graph.}
\State{Set all relations to ?}
\State{UpdateFlag $\gets$ True}
\While{Flag}
    \State{UpdateFlag $\gets$ False}
    \For{$R\in V$}
        \State{Let $S_R\subseteq S$ be the set of rules with $R$ as the head.}
        \If{Any relations in $S_R$ are ?}
            \State{Do nothing.}
        \ElsIf{$S_R=\varnothing$}\Comment{$R$ is a leaf.}
            \State{Assign 0/1 to $R$}
            \State{UpdateFlag $\gets$ True}
        \ElsIf{All of the outputs of $S_R$ are true.}
            \State{Assign 1 to $R$}
            \State{UpdateFlag $\gets$ True}
        \ElsIf{All of the outputs of $S_R$ are false.}
            \State{Assign 1 to $R$}
            \State{UpdateFlag $\gets$ True}
        \ElsIf{Two rules in $S_R$ lead to a contradiction}
            \State{Assign ? to $R$}
            \State{UpdateFlag $\gets$ True}
        \Else
            \State{Assign 0/1 to $R$}
            \State{UpdateFlag $\gets$ True}
        \EndIf
    \EndFor
\EndWhile
\Return
\EndProcedure
\end{algorithmic}
\end{algorithm}

We would like for this general algorithm to be label a large proportion of relations and to achieve small error on those relations that it labels.  We formalize these notions as \textbf{soundness} and \textbf{completeness}.

\theoremstyle{definition}
\begin{definition}{(PAC-Sound)}
A deduction algorithm is PAC-sound if:
\begin{enumerate}
    \item it is polynomial time.
    \item for any error $\epsilon>0$,$\frac{\epsilon}{||S||\cdot |A|}$, where $||S||$ the description size of $S$ and $|A|$ is the number of variables.
\end{enumerate}
\end{definition}

\theoremstyle{definition}
\begin{definition}{(Deduction sequence)}
A deduction sequence is a sequence is triples $<(q_j, \pi_j, b_j>)$, where $q_j$ is a rule, $\pi_j$ is a mapping to variables/objects, and $b_j$ is the boolean output.
\end{definition}

\theoremstyle{definition}
\begin{definition}{(Validity)}
A deduction algorithm is valid for an input $\sigma$ for all unobscured relations $R_i$, each rule in the set of rules with $R_i$ as the RHS $S_i$ is satisfied (i.e. if the predicate $Q$ is true, then predicted boolean matches the unobscured truth value).
\end{definition}

\theoremstyle{definition}
\begin{definition}{(Complete)}
A deduction algorithm is complete for a class of rule sets $S$ with respect to $R,A,C$ iff there is a valid deduction sequence ending with $(q, \pi, b)$ iff the deduction algorithm outputs $b$ given inputs $q,\pi$.
\end{definition}

\begin{theorem}
The general deduction algorithm is PAC-sound and complete if the maximum arity of rules are upper-bounded by some constant $\alpha$.
\end{theorem}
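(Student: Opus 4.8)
\emph{Overview.} The plan is to discharge the three parts of the claim separately: polynomial running time, the per-rule error bound that makes the deductions PAC-sound, and the equivalence between the algorithm's output and the existence of a valid deduction sequence (completeness). The bounded-arity hypothesis is used twice — to keep the number of ground atoms polynomial, and to keep the number of candidate substitutions $\pi$ per rule polynomial — so that every ``polynomially many'' count below is genuinely polynomial.

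\emph{Polynomial time.} Since every relation has arity at most the constant $\alpha$, the scene vector has length $\sum_{i=1}^{t} n^{\alpha_i} \le t\,n^{\alpha}$, so $|V|$ is polynomial in $n$. I would then argue the \textsc{while} loop makes at most $|V|$ informative passes: a relation leaves the $?$ state only once every relation occurring in a rule of $S_R$ has been assigned, so by induction on the longest information-carrying directed path in $G$, every relation that can ever be resolved is resolved within $|V|$ passes, after which \textsc{UpdateFlag} stays false and the loop halts. Each pass inspects all $R\in V$ and, for each, performs a constant-size evaluation of the rules in $S_R$, so a pass costs $\mathrm{poly}(|V|,\|S\|)$ and the total time is polynomial.

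\emph{Error bound.} Apply Theorem 2.4: because $C^{l}$ is PAC-learnable, the Robust Logics Learning Algorithm returns, for each of the $\le|S|$ rule templates, a learned rule whose error under the scene distribution is below any target $\epsilon'$. A deduction sequence ending at a fixed relation resolves each of the polynomially many ground atoms at most once and therefore involves only polynomially many rule applications; I would calibrate $\epsilon'$ to the reciprocal of a bound on this count — which is exactly what the factor $\|S\|\cdot|A|$ in the definition of PAC-soundness records — and union-bound over the applications to conclude that the chained deduction errs on the target relation with probability at most $\epsilon$ over the training draw. Together with the running-time bound this is PAC-soundness.

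\emph{Completeness, and the hard part.} For the ``only if'' direction, every token \textsc{GeneralDeduction} commits is locally consistent with $S$ — it writes $1$ or $0$ only when all applicable rules in $S_R$ force that value and writes $?$ only on a genuine contradiction — so reading the assignments in the order they were made yields a valid deduction sequence terminating at any labeled relation. For the ``if'' direction I would induct on the length of a valid deduction sequence: its first triple involves a leaf or an unobscured relation, which the algorithm pins down immediately, and if the fixpoint already matches a prefix, then the inputs of the next rule are all non-$?$, so that rule fires on the following pass and produces the same boolean; since the loop runs to a fixpoint, nothing derivable is omitted. The main obstacle is not any single step but making ``valid deduction sequence,'' the semantics of the $0/1$ and $?$ tokens, and the informal error-accounting in the definition of PAC-soundness precise enough that the union bound and the two inductions actually close — in particular one must check that a cycle in $G$ can never fabricate a definite token without external input (so soundness survives), that every genuinely forced value still propagates (so completeness holds), that $?$-token propagation in the \textsc{while} loop cannot induce indefinite oscillation, and that the $\|S\|\cdot|A|$ bound on rule applications per deduced atom is correct once bounded arity is invoked.
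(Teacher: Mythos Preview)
Your proposal is correct and follows the same overall strategy as the paper --- separate the polynomial-time claim from the error claim and use the bounded-arity hypothesis to keep both counts polynomial --- but the bookkeeping differs in two places worth noting. For running time, the paper counts by rule instantiations rather than by passes of the \textsc{while} loop: each rule $q$ has at most $n^{\alpha}$ groundings, each predicate on its left-hand side costs $O(n^{\alpha})$ to evaluate, and summing $n^{\alpha}\bigl(n^{\alpha}\,l(q)+c(q)\bigr)$ over $q\in S$ gives the bound $n^{\alpha}\bigl(\|S\|\,n^{\alpha}+|S|\,c(S)\bigr)$; your fixpoint-pass argument reaches the same conclusion and is arguably closer to the algorithm's actual control flow. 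For the error bound the paper is slightly more concrete than you are: it sets $\epsilon'=\epsilon/(|S|\,n^{\alpha})$ and union-bounds over the at most $|S|\,n^{\alpha}$ rule applications that can occur on a scene, which matches your plan once you identify $\|S\|\cdot|A|$ with that quantity. Finally, the paper's proof does not treat completeness at all --- it establishes only the two PAC-soundness conditions --- so your inductive argument on deduction-sequence length, and your explicit worries about cycles and $?$-oscillation, go beyond what the paper supplies.
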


\begin{proof}

For each rule $q$, we define $Ev(n,\alpha)$, $l(q)$ and $c(q)$ to be the number of evaluations per LHS predicate, the number of predicates in the LHS, and the complexity of evaluating a single connective function, respectively.

For each rule $q$, the algorithm runs at most $n^\alpha$ times.  We further note that the truth value of any predicate as currently bounded can be computed in $O(n^\alpha)$ steps.

Then the complexity of the algorithm is bounded by:
\begin{align*}
    \sum_q n^\alpha_q\left(Ev(n,\alpha)\cdot l(q)+c(q)\right)&\leq\sum_q n^{\alpha_q}\left(n^\alpha\cdot l(q)+c(q)\right)\\
    &\leq \sum_q n^\alpha\left(n^\alpha\cdot l(q)+c(q)\right)\tag{By arity bound}\\
    &\leq n^\alpha\left(||S||n^\alpha + |S|c(S)\right)\tag{By definition of description size}
\end{align*}
Thus, the runtime is polynomial for constant $\alpha$.

We now show that the algorithm meets the accuracy conditions.

For rule set $S$ and scene $\sigma$, the algorithm evaluates at most $|S|$ rules.  We previously bounded the error for a single rule to:
\begin{align*}
    er_D(q)&=\sum_\sigma D_(\sigma)\left(err^+(q,\sigma)+err^-(q,\sigma)\right)\leq \epsilon'\tag{By PAC-learnability from scenes}
\end{align*}

We select $\epsilon'=\frac{\epsilon}{|S|n^\alpha}$, where $\epsilon$ is the error required by the definition of PAC-soundness.  We see then that the probability that an arbitrary scene $\sigma$ errs is no more than $|S|n^\alpha\cdot \epsilon'=\epsilon$.
\end{proof}

Thus, we have now demonstrated an algorithm that learns to reason soundly and completely on given variables, relations, scenes each describing the truth values of some relations, knowledge of the structure of rules, PAC-learnability of the boolean functions in the rules, and bounds on the arities and sizes of the rules. \cite{valiant-robust}

\subsubsection{Knowledge Infusion}

\theoremstyle{definition}
\begin{definition}{(Knowledge Infusion)}
Any process of knowledge acquisition by a computer that satisfies the following 3 properties:
\begin{enumerate}
    \item The stored knowledge is encoded such that principled reasoning on the knowledge is computationally feasible.
    \item The stored knowledge and corresponding reasoning must be robust to errors in the system inputs, to uncertainty in knowledge, and to gradual changes in the truth.
    \item The acquisition must be carried out automatically and at scale.
\end{enumerate}
\end{definition}

Valiant and Loizos present motivations, PAC bounds, and an applied algorithm in the context of knowledge infusion.  We discuss their 2 seminal publications. \cite{valiant-infusion} \cite{michael-infusion}

\paragraph{Motivation}

The premise of knowledge infusion is to extend robust logics using cognitive science as inspiration.  Recall that robust logics learns rules between relations given scenes: booleans that give a partial description of variables in a specified ontology.  Contrast this with standard ILP frameworks, which quantify existentially and universally over an unlimited/ill-specified world.  This corresponds roughly to the notion of a \textbf{working memory}: learning on a small-dimensional, manageable subset of the world.

This notion spawns 3 subproblems.

\begin{enumerate}
    \item \jeff{Parallel Concepts}
    
    There is no longer a single target function or a target query; to infuse large quantities of knowledge, the complexity of learning many targets may increase.
    
    \item \jeff{Teaching materials}
    
    Learning from a working memory model requires an abundance of teaching materials to educate the model with an appropriate order of positive and negative examples.
    
    \item \jeff{Are Rules Necessary?}
    
    As we previously saw, Khardon and Roth's algorithm for learning-to-reason re-learns the model from scratch after each mistake by memorizing examples.  Rules may or may not provide better performance.
\end{enumerate}

We discuss the former two, demonstrate applied results, and then present arguments from Juba, Valiant, Khardon, and Roth on the final point.

\paragraph{Parallel Concepts}

We present Valiant's findings on learning many concepts simultaneously from few examples.  Surprisingly, learning $N$ functions rather than just one over the same domain does not require $N$ times as many examples; very few additional examples are needed.

\theoremstyle{definition}
\begin{definition}{(Simultaneous-PAC Learning Algorithm)}
For polynomial $m$, some $\epsilon>0$, some confidence $1-\delta$, and some number of functions $N$, $L$ is an $m(n,\epsilon,\delta,N)$-simultaneous PAC-learning algorithm over concept class $C$ iff for any distribution $D$ after $m(n,\epsilon,\delta,N)$ examples, it can learn a set of $N$ hypotheses for arbitrary concepts $c_1,...,c_N\in C$ to accuracy $\epsilon$ with probability $1-\delta$.
\end{definition}

We now present 2 theorems in support of the notion of a log-increase on the number of examples needed to support many functions.

\begin{theorem}
If there is an $m(n,\epsilon, \delta)$-PAC learning algorithm for concept class $C$, then there is an $m(n,\epsilon,\delta', N)$-simultaneous-PAC learning algorithm for $C$, where $m(n,\epsilon,\delta',N)=m(n,\epsilon,\delta'/N)$.
\end{theorem}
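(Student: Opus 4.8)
The plan is a straightforward union-bound (confidence-splitting) argument. Let $L$ be the assumed $m(n,\epsilon,\delta)$-PAC algorithm for $C$. To build a simultaneous learner for target concepts $c_1,\dots,c_N\in C$, I would draw a single sample $S$ of $m(n,\epsilon,\delta'/N)$ examples from the fixed (unknown) distribution $D$. In the simultaneous/working-memory setting each drawn example is a scene that exposes the truth value of every relation, hence the label of every $c_i$, at once, so from $S$ I can form, for each $i$, the $c_i$-labeled sample $S_i$ on the same underlying points, and run $L$ on $S_i$ to obtain a hypothesis $h_i$.

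Next I would invoke the PAC guarantee of $L$ with its confidence parameter set to $\delta'/N$: for each fixed $i$, over the random draw of the $m(n,\epsilon,\delta'/N)$ points, $\Pr_D[\,er_D(h_i)>\epsilon\,]\le \delta'/N$. A union bound over $i=1,\dots,N$ then gives
\[
\Pr_D\!\left[\,\exists\, i\in\{1,\dots,N\}:\ er_D(h_i)>\epsilon\,\right]\ \le\ \sum_{i=1}^N \frac{\delta'}{N}\ =\ \delta',
\]
so with probability at least $1-\delta'$ all $N$ hypotheses are simultaneously $\epsilon$-accurate, which is exactly the defining property of an $m(n,\epsilon,\delta',N)$-simultaneous-PAC algorithm with $m(n,\epsilon,\delta',N)=m(n,\epsilon,\delta'/N)$. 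Efficiency is immediate: the new algorithm calls $L$ exactly $N$ times on a sample of this size, so its running time is $N$ times that of $L$, still polynomial in $n,\tfrac1\epsilon,\tfrac1{\delta'},N$.

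The only point that needs care — the ``obstacle,'' such as it is — is the reuse of a single sample $S$ across all $N$ subproblems. The events ``$L$ fails on $c_i$'' are highly correlated, since they depend on the same draw, but the union bound requires no independence, so this costs nothing; what genuinely must hold is that each individual invocation sees an i.i.d.\ sample from $D$ with the correct labeling, which is guaranteed precisely because a scene carries the truth values of all relations simultaneously. (If one instead had only a separate labeled-example oracle per concept, the same bound would follow by drawing $N$ independent samples of size $m(n,\epsilon,\delta'/N)$; the stated theorem is the sharper same-sample version.) I would close by noting that for the usual dependence $m\propto\log(1/\delta)$ the $\delta'/N$ blow-up adds only an $O(\log N)$ term to the sample size, which is the quantitative ``log-increase'' advertised before the theorem.
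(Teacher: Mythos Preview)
Your proposal is correct and follows essentially the same route as the paper: run the base PAC learner $L$ on each of the $N$ targets with confidence parameter $\delta'/N$, then apply a union bound over the $N$ failure events to obtain overall confidence $1-\delta'$. Your write-up is in fact more careful than the paper's (you explicitly address sample reuse, the harmlessness of correlation under the union bound, and efficiency), but there is no substantive difference in the argument.
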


\begin{proof}
We sample $m(n,\epsilon,\delta)$ random examples from $D$ and apply $L$ to each function separately.  

The probability of a specific hypothesis having error more than $\epsilon$ is $\delta$.  By union-bound, the probability of at least one hypothesis having error more than $\epsilon$ is $N\delta$.

We choose $\delta'=N\delta$.  This completes the proof.
\end{proof}

A consequence of this algorithm is that concept classes that have a dependence on $\log\left(\frac{1}{\delta}\right)$ (such as disjunctions over $k$ variables \cite{littlestone}) then have a log-dependence on $N$, which is promising.

The next theorem extends this principle in general to all concept classses.

\begin{theorem}{(Simultaneous Occam Theorem)}
Suppose we have domain $X_n=\{x_1,...,x_n\}$, distribution $D_n$, and target functions $F=\{f_1,...,f_N\}$, and a learning algorithm $A$ that outputs hypothesis $h_1,...,h_N\in H_n$ for functions in $F$.

Suppose further that an experiment occurs where $m*$ examples are drawn from $D_n$, and the result is that the hypotheses $H*$ predict \textbf{every sample point correctly}.

Then for any $m>\frac{1}{2\epsilon}\left(\log_2 |H_n| + \log_2\left(\frac{1}{\delta}\right)\right)$ and any $\epsilon < \frac{1}{2}$:
\begin{enumerate}
    \item $\Pr[\text{There is a hypothesis with error } > \epsilon]<\delta$
    \item $\Pr[\text{There is a hypothesis with precision } > \frac{\epsilon}{\phi_i}]<\delta$, where $\phi_i$ is the predicted positive rate.
    \item $\Pr[\text{There is a hypothesis with recall } > \frac{\epsilon}{\phi_i-\epsilon}]<\delta$
\end{enumerate}
\end{theorem}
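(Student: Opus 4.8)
The plan is to treat this as a simultaneous (union-bounded) version of the classical Occam's-razor consistency argument, and then to read off the precision and recall claims as deterministic corollaries of the error claim by decomposing the error of each $h_i$ into its false-positive and false-negative parts. Statement~1 is the only place a probabilistic argument is needed; statements~2 and~3 are pure conditional-probability algebra on the event that all errors are small.

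\textbf{Step 1 (error).} Fix a target index $i$ and call $h\in H_n$ \emph{bad for $f_i$} if $\Pr_{D_n}[h(x)\neq f_i(x)]>\epsilon$. For a fixed bad $h$, the probability it agrees with $f_i$ on all $m$ i.i.d.\ draws is at most $(1-\epsilon)^m\le 2^{-\epsilon m}$ (using $\ln(1-\epsilon)\le-\epsilon\le-\epsilon\ln 2$). Union-bounding over the at most $|H_n|$ bad hypotheses and then over the $N$ indices, the probability that some consistent $h_i$ is bad is at most $N|H_n|\,2^{-\epsilon m}$. Since the experiment produced $H^\ast$ consistent with all $m^\ast\ge m$ sample points, substituting $m>\frac{1}{2\epsilon}\!\left(\log_2|H_n|+\log_2(1/\delta)\right)$ (with the $\log_2 N$ contribution absorbed into the slack, as discussed below) forces this probability below $\delta$, which is statement~1.

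\textbf{Steps 2--3 (precision, recall).} Condition on the good event $E$ that every $h_i$ has error $\le\epsilon$; by Step~1, $\Pr[E]\ge 1-\delta$. On $E$, decompose $er_{D_n}(h_i)=\Pr[h_i=1,f_i=0]+\Pr[h_i=0,f_i=1]\le\epsilon$. The false-discovery rate of $h_i$ is $\Pr[h_i=1,f_i=0]/\Pr[h_i=1]=\Pr[h_i=1,f_i=0]/\phi_i\le\epsilon/\phi_i$, since the numerator is bounded by the total error; reading the theorem's ``precision $>\epsilon/\phi_i$'' as this false-discovery event gives statement~2 with probability $\le\delta$. For recall, the miss rate is $\Pr[h_i=0,f_i=1]/\Pr[f_i=1]$, with numerator again $\le\epsilon$; and $\phi_i=\Pr[h_i=1]\le\Pr[f_i=1]+\Pr[h_i=1,f_i=0]\le\Pr[f_i=1]+\epsilon$ gives $\Pr[f_i=1]\ge\phi_i-\epsilon$, so the miss rate is $\le\epsilon/(\phi_i-\epsilon)$ for every $i$ on $E$, which is statement~3.

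\textbf{Main obstacle.} The only genuine subtlety is bookkeeping around the stated sample-size bound: the honest union bound carries an additive $\log_2 N$ term (this is precisely the promised ``log-increase'' in examples for $N$ concepts), and matching the exact constant $\tfrac{1}{2\epsilon}$ against base-2 logarithms requires care about which of $(1-\epsilon)^m\le e^{-\epsilon m}\le 2^{-\epsilon m}$ and the hypothesis $\epsilon<\tfrac12$ is actually invoked. I would reconcile the statement either by explicitly restoring the $\log_2 N$ summand or by noting it is dominated once $|H_n|$ is polynomially large in $N$; beyond that, every step is elementary, and crucially the precision and recall bounds require no fresh concentration inequality — they hold deterministically on the event handled in Step~1.
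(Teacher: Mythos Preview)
Your proposal is correct and follows essentially the same route as the paper: a union-bounded Occam consistency argument for statement~1, followed by the same conditional-probability decomposition of error into false positives and false negatives to read off the precision and recall bounds. If anything, your version is more careful than the paper's: you explicitly track the $\log_2 N$ term (which the paper silently drops from its union bound), you use inequalities where the paper writes equalities in the precision/recall algebra, and you correctly flag the tension between the stated $\tfrac{1}{2\epsilon}$ constant and the base-2 logarithms.
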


\begin{proof}
Suppose that a function $h\in H_n$ (not necessarily one of the chosen hypotheses) has error greater than $\epsilon$.  By Multiplication Rule, the probability of $h_i$ labeling all $m*$ examples correctly is $(1-\epsilon)^m$.

By union bound, the probability of \textbf{any} function in $H_n$ having error greater than $\epsilon$ is then $|H_n|(1-\epsilon)^m$.

We then have:
\begin{align*}
    |H-n|(1-\epsilon)^m &< \delta\tag{By PAC algorithm}\\
    \frac{1}{\delta}|H-n|(1-\epsilon)^m &<1 \\
    \log\left(\frac{1}{\delta}\right)+\log |H_n| +m\log(1-\epsilon)&<0\\
    m &> \boxed{\frac{1}{2\epsilon} \left[\log|H_n|+\log\left(\frac{1}{\delta}\right)\right]}
\end{align*}

This gives us the theorem's required lower bound for the number examples; the example bound is equivalent to the PAC condition.  We now prove the three error bounds.

The first error bound is immediate from the definition of PAC learning.

The second error bound follows from definitions:
\begin{align*}
    \text{Precision} &= 1 - \frac{\Pr[h(x)=1\land f(x)=0]}{\Pr[h(x)=1]}\tag{By definition}\\
    &= \boxed{1 - \frac{\epsilon}{\phi}}\tag{By definition}
\end{align*}

The third error bound follows from definitions:
\begin{align*}
    \text{Recall} &= 1 - \frac{\Pr[h(x)=0\land f(x)=1]}{\Pr[f(x)=1]}\tag{By definition}\\
    &= 1 - \frac{\epsilon}{\Pr[h(x)=1]-\Pr[f(x)=0\land h(x)=1]}\tag{Probability set difference}\\
    &=\boxed{1-\frac{\epsilon}{\phi-\epsilon}}
\end{align*}
\end{proof}

Valiant concludes that if learning a single rule is tractable in the PAC sense, then it is also tractable to learn many rules (even exponentially many) given that there is only a log dependence on the number of rules.  This arises from a re-use of examples for every rule.

\paragraph{Teaching Materials and Scene Construction}

In Michael and Valiant's empirical demonstration of knowledge infusion, they create ``teaching materials'' by turning text from news sources into scenes as per the Robust Logic framework.

The dataset is the \textit{North American News Text Corpus} \cite{graff-text}, comprising 6 months worth of articles (roughly 500000 sentences).  Sentences were annotated with the \textit{Semantic Role Labeler}, an automated tagger \cite{ccg-uiuc}; sentence fragments were then passed in the \textit{Collins Head Rules}, which extracts keywords to summarize the sentence. \cite{collins-head}

Finally, each sentence summary is converted to a scene as follows:
\begin{enumerate}
    \item Create an entity for each tokenized word with a semantic object association.
    \item Create a relation for each verb in the sentence on the relevant objects in the sentence.  Michael and Valiant cap the arity $\alpha$ of the relations at 2 since scene size is exponential in $\alpha$.
    \item Create a relation for \textit{proximity instances} for words that are close to each other in the sentence.
\end{enumerate}

A fuller description of scene construction can be found on pg. 381-382 of the original paper. \cite{michael-infusion}

\paragraph{Experimental Approach}

Given the constructed scenes, there are 3 primitive rule operations:
\begin{enumerate}
    \item \jeff{Rule Induction}: given a target relation and scenes, create a rule for the target relation.  Michael and Valiant choose to use the Winnow algorithm for lienar threshold functions (perceptrons) to learn the rules, as per the sugggestion from Valiant's Robust Logics paper. \cite{valiant-robust}
    \item \jeff{Rule Evaluation}: given an input rule $K_i$, the input rule's head relation $R_i$, and a set of scenes $E$, predict the truth value of $R_i$.
    \item \jeff{Rule Application}: given a set of rules $K$ and a set of scenes $E$, apply all rules $k\in K$ in parallel, and enhance the original set of scenes $E$ with unambiguous truth values determined by the rules.
\end{enumerate}

Michael and Valiant define the chaining of these 3 primitives as the \textbf{rule chaining task}.  They then create 4 discrete distributions\footnote{We refer to these as distributions, but the original terminology is ``experiment.'' The renaming is due to the term experiment being quadruply overloaded by the original authors.} for the rule chaining task.

Each of the following 4 distributions takes as input a training set $T_0$ of scenes, a testing set $E_0$ of scenes, an enhancement set $R_{enh}$ of relations (akin to a knowledge base or a prior of rules), and a target relation $R_{i_0}$.  The output in each task is a rule $K_{i_0}$ for predicting $R_{i_0}$, and we evaluate the performance of $K_{i_0}$ on the testing set $E_0$.

\begin{enumerate}
    \item \jeff{Distribution 00}: the enhancement set $R_{enh}$ is ignored.
    \item \jeff{Distribution 11}: the enhancement set $R_{enh}$ is used on the training set $T_0$ in learning rule $K_{i_0}$.  The enhancement set is further used to augment the testing scenes into $E_0'$.
    \item \jeff{Distribution 01}: the enhancement set is only used on the testing set.
    \item \jeff{Distribution 10}: the enhancement set is only used in learning.
\end{enumerate}

Finally, Michael and Valiant experiment along one more axis: syntactic vs. semantic information.
\begin{enumerate}
    \item \jeff{Syntactic}: only includes word/pos and proximity relations.
    \item \jeff{Semantic}: only includes word/pos and verb relations.
\end{enumerate}

\paragraph{Results}

Michael and Valiant note that they did not achieve any notable results from distributions 01 and 10 because the training and test procedures are misaligned.  They then present 5 different arrangements of the distributions (00 or 11) and the type of information (semantic, syntactic, or both).  The parameters and results were as follows.

\begin{table}[H]
\centering
\caption{Experimental Design Parameters}
\label{my-label}
\begin{tabular}{|l|l|l|}
\hline
Experiment & Enhanced Ruleset Distribution & Information Type \\ \hline
A          & 00                            & Semantic         \\ \hline
B          & 11                            & Semantic         \\ \hline
C          & 00                            & Syntactic        \\ \hline
D          & 00                            & Both             \\ \hline
E          & 11                            & Both             \\ \hline
\end{tabular}
\end{table}

\begin{figure}[H]
  \caption{5 experiments along enhanced ruleset distribution and information type}
  \centering
    \includegraphics[width=0.8\textwidth]{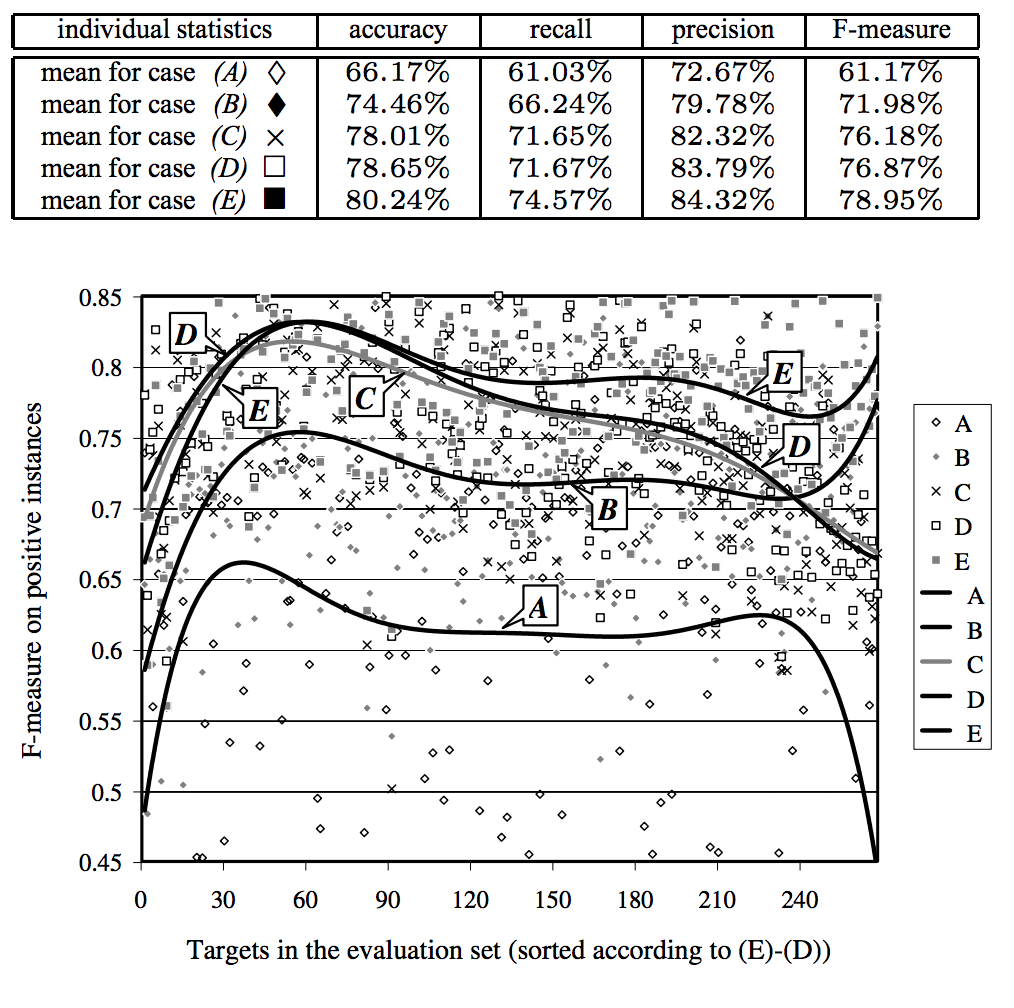}
    \includegraphics[width=0.8\textwidth]{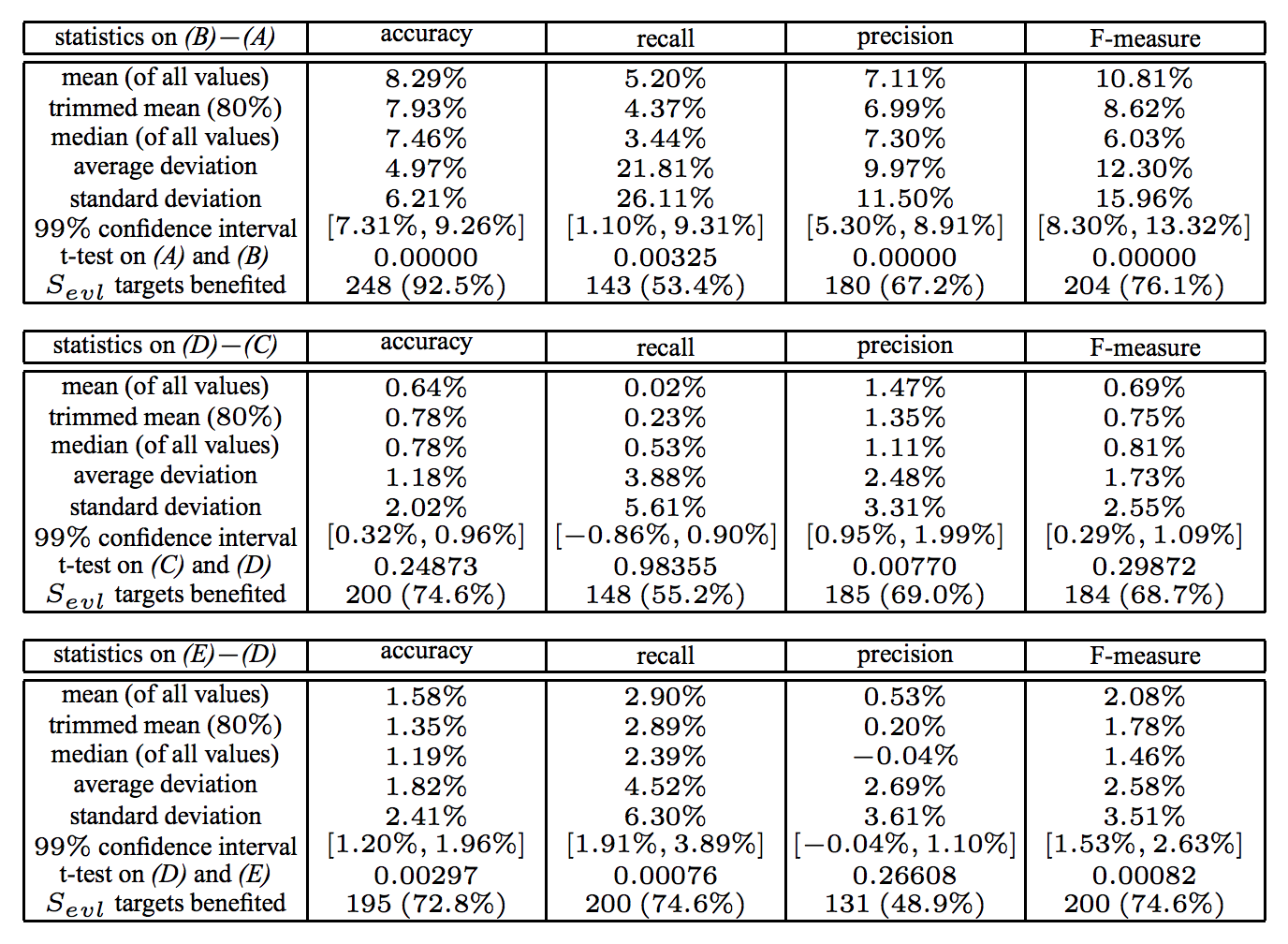}
\end{figure}

The best results from experiment $E$ are promising: the learned rule generates nearly 83\% accuracy on test scenes with potential relations and entities never seen before.  This strongly demonstrates the case for logical inference as a component of learning.

The cross-experiment comparisons demonstrate that:
\begin{enumerate}
    \item Syntactic information is critical for performance at any level (without it, performance does not significantly improve from 50\%).
    \item Semantic information in addition to syntactic information demonstrates little improvement -- in other words, verbs provide no additional information on proximity.
    \item An enhanced set of scenes from augmenting rules does not initially increase performance, but allows the algorithm to generalize to a larger set of targets in the evaluation task.
\end{enumerate}

Michael and Valiant conclude that chaining learned rules is both feasible and scalable.


\subsection{Are Rules Necessary?}

We conclude this section on PAC-frameworks to raise the question of whether learning logical induction requires an explicit representation of rules: Khardon and Roth \cite{khardon-l2r} created a framework that requires no such representation, while Michael and Valiants's Robust Logics and Knowledge Infusion frameworks centralizes the principle of explicit rules (and also require the form of the rules to be given).

Valiant even brings up this issue in his original Knowledge Infusion paper and brings up 3 reasons in defense of rules. \cite{valiant-infusion}
\begin{enumerate}
    \item Chaining rules allows the algorithm to make inference about situations too infrequent to have data that supports a learning-only deduction.
    \item Chaining in Robust Logics learns equivalence rules that allow for a higher complexity than direct learning.
    \item Learning without Robust Logics does not allow for programmed rules as priors.  There is no analog to this knowledge transfer.
\end{enumerate}

Juba argues that Valiant's knowledge infusion algorithm is not sufficiently powerful to support the finding that the number of rules that can be learned is exponentially large in the size of the knowledge base \textbf{because Valiant's algorithm preserves explicit rules}. \cite{juba}

We define 6 terms in anticipation of a theorem justifying Juba's claim.

\theoremstyle{definition}
\begin{definition}{(Partial example)}
An element of $\{0,1,\$\}^n$: analogous to Valiant's notion of an obscured scene.
\end{definition}

\theoremstyle{definition}
\begin{definition}{(Witnessed formula)}
A boolean formula $\varphi$ is witnessed in partial example $\rho$ iff all literals of $\varphi$ are unobscured in $\rho$.
\end{definition}

\theoremstyle{definition}
\begin{definition}{(Witnessed formula)}
A boolean formula $\phi$ is witnessed in partial example $\rho$ iff all literals of $\varphi$ are unobscured in $\rho$.

Given literals $L$ observed to be true and masked literals $M$, a threshold connective formula $\varphi=\text{sign}(c_1\phi_1+...+c)k\phi_k-b\geq 0)$ is witnessed in partial example $\rho$ iff:
\begin{align*}
    \sum_{l\in L} c_l + \sum_{m\in M} \min(0, c_m) \geq b 
\end{align*}
\end{definition}

\theoremstyle{definition}
\begin{definition}{(Restricted formula)}
Given literals $L$ observed to be true over $\rho$, a restricted formula $\varphi\mid_\rho$ over threshold connected $\varphi=\text{sign}(c_1\phi_1+...+c)k\phi_k-b\geq 0)$ is recursively defined as $\varphi$ if $\varphi$ is observed and $\text{sign}(\sum_{L^c} c_k - b)$ otherwise.
\begin{align*}
    \sum_{l\in L} c_l + \sum_{m\in M} \min(0, c_m) \geq b 
\end{align*}
\end{definition}

\theoremstyle{definition}
\begin{definition}{(Automatizability Problem)}
Given a set of proofs $S$ and an input formula $\varphi$, the automatizability problem is deciding whether a proof of $\varphi$ exists in $S$.
\end{definition}

\theoremstyle{definition}
\begin{definition}{(Restriction-closed set of proofs)}
A set $S$ is restriction-closed if for all formulas $\varphi$, the existence of a proof of $\varphi$ in $S$ implies the existence of a proof of $\varphi\mid_\rho$ in $S\mid_\rho$.
\end{definition}

Finally, we arrive at the theorem.

\begin{theorem}
Let $S$ be a restriction-closed set of proofs, and suppose there is an algorithm for the Automatizability Problem that is polynomial $P$ in $n$, $||\varphi||$, and $|H|$.  Then there is an algorithm that uses $O\left(\log\frac{1}{\delta}\right)$ examples and runs in $O\left(P\cdot \log\frac{1}{\delta}\right)$ time such that deciding which of the following holds is done correctly with probability $1-\delta$:
\begin{itemize}
    \item $H\implies \varphi$
    \item There exists a proof from $\varphi$ to $\{\varphi_1,...,\varphi_k\}\cup H$, where $\{\varphi_1,...,\varphi_k\}$ are all witnessed.
\end{itemize}
\end{theorem}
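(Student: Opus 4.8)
The plan is to avoid ever materializing an explicit rule set and instead interleave sampling with calls to the assumed automatizability algorithm, exploiting the fact that every drawn partial example $\rho$ hands us, for free, a set of formulas \emph{witnessed} in $\rho$ --- namely the literals unobscured in $\rho$ --- and that restriction-closedness of $S$ lets us fold those observed facts into the proof search legitimately by passing to the restricted instance $(\varphi\mid_\rho,\, S\mid_\rho,\, H\mid_\rho)$. Concretely, I would: (i) first run the automatizability algorithm on $(\varphi, S, H)$ alone, and if it finds a proof, output ``$H\implies\varphi$''; (ii) otherwise draw $m=\Theta\!\left(\log\frac{1}{\delta}\right)$ partial examples $\rho_1,\dots,\rho_m$ i.i.d.\ from $D$, and for each $j$ let $W_j$ be the set of literals witnessed true in $\rho_j$ and call the automatizability algorithm to decide whether a proof of $\varphi$ from $H\cup W_j$ exists in $S$ (equivalently, whether a proof of $\varphi\mid_{\rho_j}$ exists in $S\mid_{\rho_j}$ from $H\mid_{\rho_j}$); (iii) if every such call succeeds, output ``there is a proof from $\varphi$ to $\{\varphi_1,\dots,\varphi_k\}\cup H$ with all $\varphi_i$ witnessed,'' taking the $\varphi_i$ to be the $W_j$ of any successful example. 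The resource bounds are then immediate: $O(\log\frac1\delta)$ examples and $O(\log\frac1\delta)$ oracle calls, i.e.\ $O\!\left(P\cdot\log\frac1\delta\right)$ time.

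Correctness splits into two directions. The ``provable'' verdict is sound by construction: the oracle returned an honest proof in $S$ of $\varphi$ from $H$ together with literals each of which was literally observed (hence witnessed) in a drawn example, so a witness for the second alternative has been exhibited. For the other direction, if it is \emph{not} the case that $H\models\varphi$ (and no proof of $\varphi$ from $H$ alone exists in $S$, so step (i) did not fire), there is a set of positive probability of completions on which $H$ holds and $\varphi$ is falsified; a literal falsifying $\varphi$ that is witnessed false in $\rho$ stays false under the restriction, so on any example $\rho_j$ landing in that region $\varphi\mid_{\rho_j}$ is unprovable in the sound system $S\mid_{\rho_j}$ and the oracle reports failure --- a Chernoff/union bound over the $m=\Theta(\log\frac1\delta)$ samples makes the probability of drawing at least one such example at least $1-\delta$. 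Conversely, when $H\models\varphi$, completeness of $S$ on the now fully determined restricted instances, together with restriction-closedness carrying the proof of $\varphi$ down to a proof of $\varphi\mid_{\rho_j}$, guarantees every sampled call succeeds, so the algorithm does not spuriously report a counterexample.

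I expect the main obstacle to be the careful bookkeeping around obscured literals: one must check that ``witnessed'' is exactly the closure condition that makes both uses of restriction go through --- restriction-closedness to push a proof of $\varphi$ to a proof of $\varphi\mid_\rho$ in the completeness direction, and soundness preservation in the other direction, so that a proof of $\varphi\mid_\rho$ in $S\mid_\rho$ genuinely certifies $\varphi$ on every completion of $\rho$ consistent with the observed part. A secondary caveat worth flagging: with no explicit accuracy parameter in the statement, the $O(\log\frac1\delta)$ sample bound only suffices if the counterexample region, when nonempty, carries constant probability under $D$; following Juba's original formulation one would either phrase this as a promise on the inputs or re-parameterize with an $\epsilon$ and $m=O\!\left(\frac1\epsilon\log\frac1\delta\right)$, adjusting the Chernoff step accordingly.
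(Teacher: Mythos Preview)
The paper does not actually prove this theorem: immediately after stating it, the authors write ``The proof is omitted as it is outside the scope of PAC-reasoning; for an explicit algorithm that accomplishes the above, see Juba's original publication.'' So there is no paper proof to compare your proposal against.

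That said, your sketch is a reasonable reconstruction of the kind of argument one expects here: interleave i.i.d.\ sampling of partial examples with calls to the automatizability oracle on the restricted instances, use restriction-closedness to push proofs down, and use soundness to argue that a witnessed refutation region of constant mass is hit with probability $1-\delta$ after $\Theta(\log\frac{1}{\delta})$ draws. You are also right to flag the missing accuracy parameter; the statement as quoted in the paper is somewhat loosely transcribed from Juba, and the clean version does carry an $\epsilon$ (or a promise gap) to make the Chernoff step go through. Since the paper offers nothing beyond the statement, there is no methodological divergence to report.
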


The proof is omitted as it is outside the scope of PAC-reasoning; for an explicit algorithm that accomplishes the above, see Juba's original publication. \cite{juba}

The consequence of Juba's theorem is that Juba finds a mechanism (witnessing) that preserves Valiant's finding that examples can support an exponential number rules, while the Knowledge Infusion algorithm falls short.

\section{Applied Approaches}

For a problem as difficult and general as reasoning, strong bounds (such as those specified by the PAC model) were previously only found in settings with many restrictions; for example, robust logics requires that the structure and relational ontology of all rules be known beforehand, where that very structure might be the most difficult aspect to learn.

The advent of deep learning and differentiable techniques has allowed for more complex functional approximations, so we now explore some recent advancements on the PAC-findings using applied approaches.

Whereas the PAC-findings tended to build on each others' theorems, these deep learning findings are all largely the same in format.
\begin{enumerate}
    \item Theorize a model.
    \item Formulate an experiment.
    \item Present empirical results.
\end{enumerate}

We now present findings in symbolic ontology and symbolic reasoning in this format.

\subsection{Learning Symbolic Ontology}

This corresponds to the problem of scene construction in knowledge infusion.

\subsubsection{Deep Symbolic Reinforcement Learning}

Garnelo, Arulkumaran, and Shanahan present findings on inferring objects from frame-by-frame time-varying image data. \cite{garnelo-symbolic}

\paragraph{Model}

The premise is that many of the deep learning issues addressed (lack of priors, lack of transfer, data-hungriness, transparency) are solved if the learning is done on data that is symbolically represented as ILP.  Thus, the model learns a symbolic encoding of the data, then learns deeply on the symbolic representations.

\begin{figure}[H]
  \caption{Left: the proposed deep symbolic framework.  Right: an example ``scene'' (in Valiant's terminology) of the agent (+) amongst its environment.}
  \centering
    \includegraphics[width=0.5\textwidth]{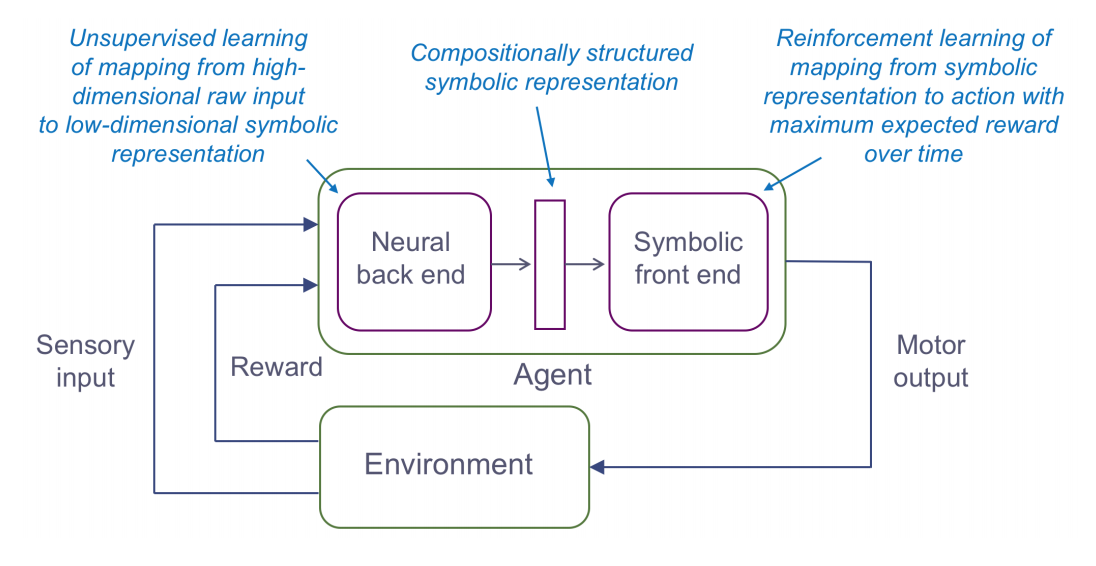}
    \includegraphics[width=0.3\textwidth]{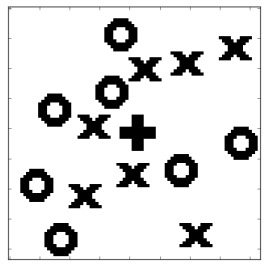}
\end{figure}

\paragraph{Experiment}

Given ``responsive'' video data of scenes as depicted in Figure 3 and an existing ontology $A$ of objects, the procedure is as follows.

\begin{enumerate}
    \item For each frame, learn which symbols in $A$ are in the picture and where.
    \item For the sequence of frames, learn how specific instances of symbols in $A$ persist from frame-to-frame and represent ``motion.''
    \item Do reinforcement learning on the environment: colliding with certain objects yields positive rewards, while colliding with other objects yields negative rewards.
\end{enumerate}

Each subproblem is solved with a heuristic algorithm.  We present the first in its entirety and the others briefly because they are outside the scope of this review.

\begin{algorithm}[H]
\caption{Symbol Generation Algorithm}\label{alg:symbol-gen}
\begin{algorithmic}[1]
\Procedure{Symbol-Generation}{Picture $P$, CNN $C$}\Comment{Learning a symbols from a pixel map}
\For{Pixel $p$ in $P$}
\State Convolute $C$ over $p$.
\State Find the symbol $\sigma_p$ that best represents $p$
\EndFor
\State Threshold pixels by max activation and create subset $S$ of pixels that surpass the threshold.  Ideally, these each pixel is a ``representative'' pixel and has a one-to-one correspondence with an actual object.
\For{Pixel $p$ in $S$}
\State Extract final symbols by comparing the spectra of activations in $p$ against the known spectra of activations for each object in $A$.
\EndFor
\EndProcedure
\end{algorithmic}
\end{algorithm}

Object persistence is modeled with a transition probability matrix between each pair of consecutive frames, where the authors hardcode 2 priors.

\begin{itemize}
    \item Spatial proximity: the likelihood is defined as the inverse distance $L_{dist}=\frac{1}{d}$.
    \item Neighborhood: between frames, the number of nearby objects is likely to be similar: $L_{neigh}=\frac{1}{1+\Delta N}$, where $\Delta N$ is the change in the number of neighbors.
\end{itemize}

Given the spatio-temporal representation of an object, the authors then implement a reinforcement learning algorithm using tabular Q-learning.  The policy update rule is:
\begin{align*}
    Q^{ij}(s_t^{ij},a_t)\gets Q^{ij}(s_t^{ij},a_t)+\alpha \left(r_{t+1}+\gamma(\max_a Q^{ij}(s_{t+1}^{ij},a)-Q^{ij}(s_t^{ij},a_t))\right)
\end{align*}

where $Q$ is the policy, $i,j$ are types of objects, $s^{ij}$ is a state of interaction betwen objects of types $i,j$, $a_t$ is an action, $r_{t+1}$ is the reward, and $\gamma$ is a temporal discount factor.  A less dense articulation of this policy update is that we update the policy proportional to the difference of the reward and the hypothesized best course of action.

\paragraph{Results}

The authors compare the performance of their learned symbolic policy to Deep Q-Networks (DQN), Deepmind's state-of-the-art reinforcement learning module.  The results are depicted below.

\begin{figure}[H]
  \caption{Deep Symbolic Learning performance, with DQN as a benchmark}
  \centering
    \includegraphics[width=\textwidth]{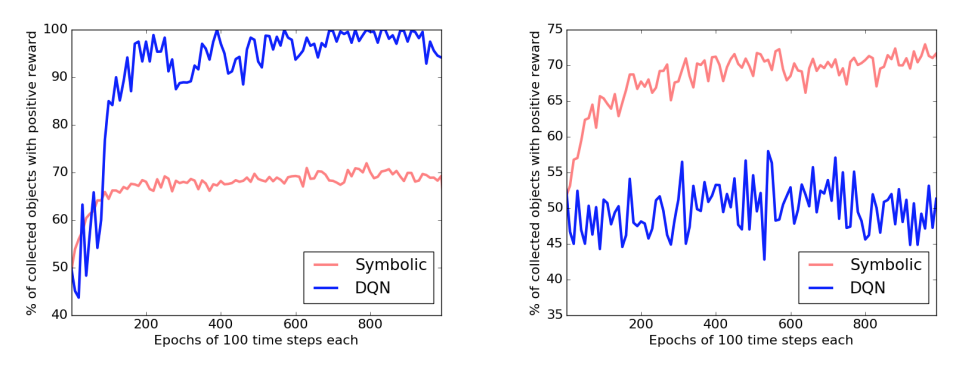}
\end{figure}

While DQN achieves optimal performance in a fixed grid setting very quickly and Deep Symbolic Learning plateaus early on, DQN fails to generalize to the random setting, while Deep Symbolic Learning does not suffer at all (curiously, it seems to perform better on the randomized setting).

We conclude that learning a symbolic representation aids in generalization in light of small amounts of data.

\subsection{Learning to Reason over a Given Ontology}

This corresponds to the rule inference problem in knowledge infusion.  While Deep Symbolic Learning made some advancement in procuring the encoding of data into logical variables, its learning algorithm did not take advantage of the symbolic structure.  The following findings are all algorithms to learn given fixed objects and relations in the ILP setting.

\subsubsection{Differentiable Inductive Logic Programming ($\partial$ILP)}
\paragraph{Model}

$\partial$ILP is a reimplementation of ILP where the rules are end-to-end differentiable (and thus gradient descent techniques can be used for learning). \cite{evans-explanatory}

\theoremstyle{definition}
\begin{definition}{(Valuation)}
Given a set $G$ of $n$ ground atoms, a valuation is a vector $v\in [0,1]^n$ mapping each ground atom $\gamma_i$ to the real unit.  Each valuation element is a ``confidence.''
\end{definition}

Consider a set $\Lambda$ of atom-label pairs derived from positive examples $P$ and negative examples $N$:
\begin{align*}
    \Lambda=\{(\gamma,1)\mid \gamma\in P\} \cup \{(\gamma,1)\mid \gamma\in P\}
\end{align*}

The interpretation is that a pair $(\gamma,\lambda)$ represents some atom $\gamma$ and the ``truthiness'' of $\gamma$ (1 for entirely positive and 0 for entirely negative).

We now have the setting for constructing a likelihood.  Recall that an ILP problem is defined in terms of $L,G,P,N$, where $L$ is a language frame (a target relation atom and a set of predicates), $B$ is a set of ground atoms ($B\subseteq G$), $P$ are positive examples, and $N$ are negative examples.  We further specify a set of clause-weights $W$.  The likelihood of $\lambda$ for a given atom $\gamma$ is then:
\begin{align*}
    p(\lambda\mid \gamma, W, L, B)
\end{align*}
We break down this likelihood into 4 functions.
\begin{align*}
    p(\lambda\mid \gamma, W, L, G)&=f_{extract}(f_{infer}(f_{convert}(B),f_{generate}(L),W,T), \alpha)
\end{align*}
\begin{itemize}
    \item $f_{extract}$ takes a valuation $x$ and an atom $\gamma$ and extracts the value $x[\gamma]$.
    \item $f_{convert}$ is an indicator r.v. for whether an ground atom from $G$ is in $B$.
    \item $f_{generate}$ produces a set of clauses from the language frame: $f_{generate}(L)=\{cl(\tau_p^i\mid p\in P_i,i\in\{1,2\}\}$, where $cl$ are the clauses the satisfy the specified template.
    \item Finally, $f_{infer}$ is a mapping $[0,1]^n\times C\times W\times \mathbb{N}\rightarrow [0,1]^n$ which infers using forward-chains from the generated clauses under the clause weights $W$.
    
    These $W$ are defined as follows.  Let $W=\{W_1,...,W_{|P_i|}\}$ (one for each predicate).  A particular weight $W_p[j,k]$ represents how strongly the system believes that the pair of clauses $(C_p^{1,j}, C_p^{2,k})$ is the ``correct way'' to infer predicate $p$.  We enforce this definition with a softmax approach.
    \begin{align*}
        W_p^*[j,k]=\frac{e^{W_p[j,k]}}{\sum_{j',k'} e^{W_p[j',k']}}
    \end{align*}
    
    In order to apply these weights to a predicate $p$, we compute all possible pairs of clauses that could infer $p$, then sum a weighted average of their softmax weights:
    
    \begin{align*}
        b_t^p=\sum_{j,k} c_t^{p,j,k}\cdot \frac{e^{W_p[j,k]}}{\sum_{j',k'} e^{W_p[j',k']}}
    \end{align*}
    
    where $b_t^p$ is the confidence in $p$ with $t$ steps of chained inference and $c_t^{p,j,k}$ is the contribution to the confidence of the inference pathway $p\gets j,k$.
    
    This algorithm is carried out by a process called amalgamation; the details of this algorithm are omitted as they are outside the scope of this review, but they can be found in section 4.4 of the original publication. \cite{evans-explanatory}
    
    Note that in order to work with 2D matrices (as opposed to $n$-D tensors), the number of predicates in a rule is capped at two (much like Valiant's Knowledge Infusion algorithm).
    
    We will learn the weights $W$ by gradient descent.
\end{itemize}

The architecture is depicted below.

\begin{figure}[H]
  \caption{Deep Symbolic Learning performance, with DQN as a benchmark}
  \centering
    \includegraphics[width=0.3\textwidth]{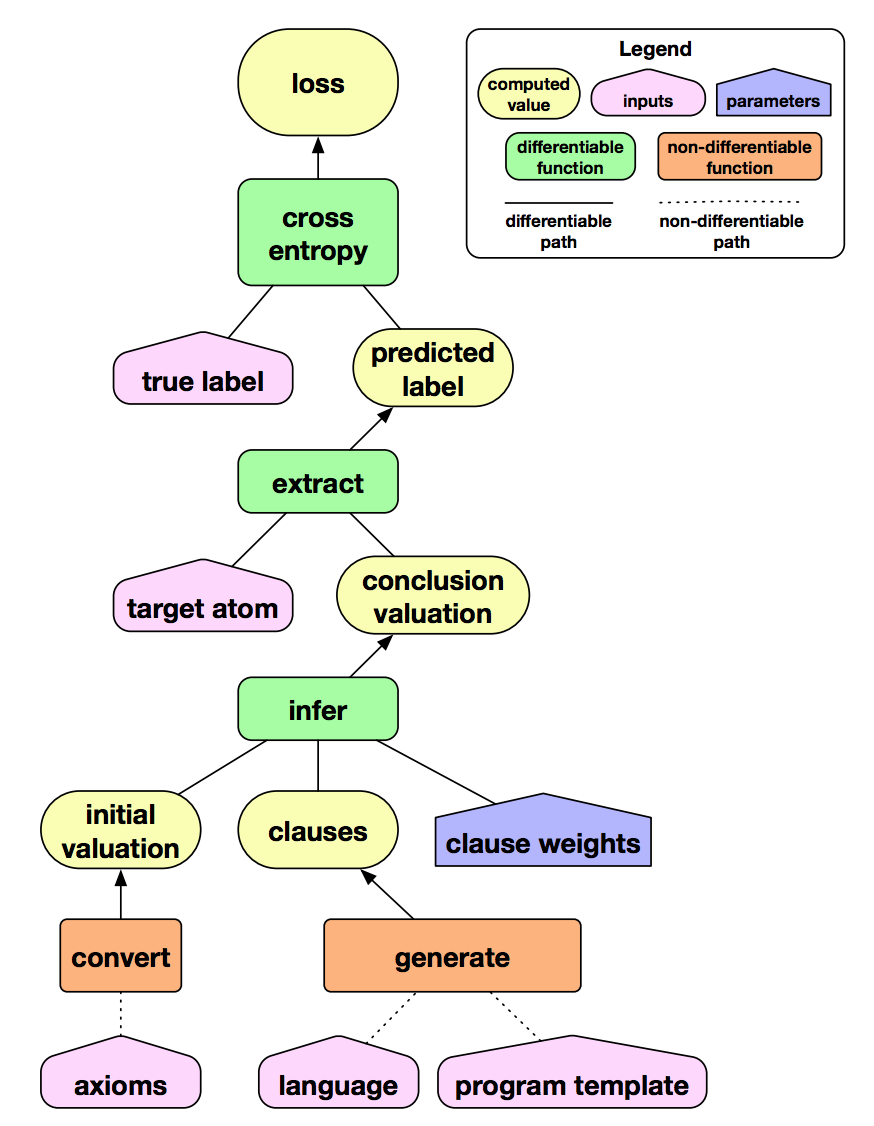}
\end{figure}

We fit this model by minimizing the expected log-likelihood over the dataset $\Lambda$ that we defined above.  Explicitly, that log-likelihood is:
\begin{align*}
    loss=-E_{\alpha,\lambda\in \Lambda}\left[\lambda\cdot \log p(\lambda\mid \gamma, W, L, G) + (1-\lambda)\cdot \log(1-p(\lambda\mid \gamma, W, L, G))\right]
\end{align*}

\paragraph{Experiment}

Evans and Grefenstette first tested $\partial$ILP on 20 tasks with no noise, including learning even numbers, learning FizzBuzz (divisibility by 3 xor 5), and learning graph cyclicity.  Pending success, they then tested on ambiguous data such as learning even numbers from pixel images, learning if one image is exactly two less than another, learning if at least one image is a 1, learning the less-than relation from images, etc.
\paragraph{Results}

Results for unambiguous data were shown to be quite strong, beating state-of-the-art results.  Results for ambiguous data were even more impressive, tolerating up to 20\% noise with very little gain in mean squared-error.

\begin{figure}[H]
  \caption{$\partial$ILP performance}
  \centering
    \includegraphics[width=0.4\textwidth]{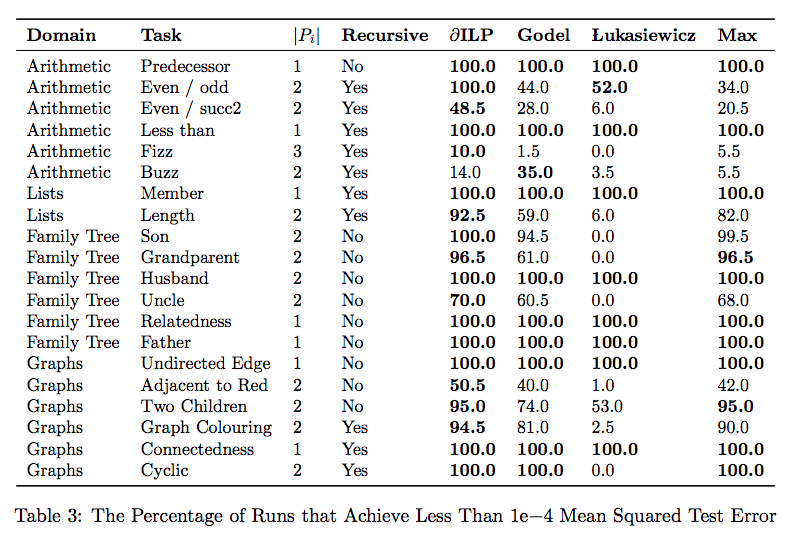}
    \includegraphics[width=0.5\textwidth]{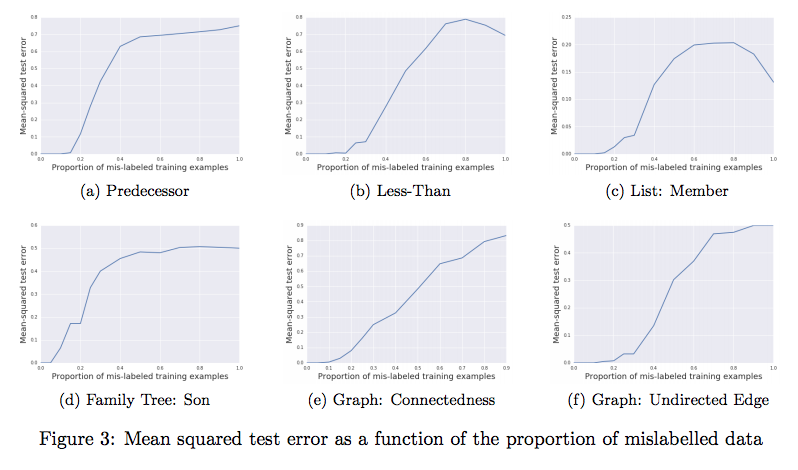}
\end{figure}

\subsubsection{DeepLogic}

Whereas $\partial$ILP had 4 models, of which 1 was a polynomial-growing framework of transition weights, DeepLogics seeks to compute all steps of inference using a fixed number of neural networks.

\paragraph{Model}

The goal is to learn some function $f:C\times Q\rightarrow \{0,1\}$, where $C$ is the context of the logic program and $Q$ is the query (head atom in ILP terms).  Cingillioglu et al. present a module called a Neural Inference Network (NIN) to accomplish this goal.

NIN takes as input two sequences of characters $c_0^C,...,c_m^C$ (context vector) and $c_0^Q,...,c_n^Q$ (query vector).  Given the existing structure of $R$ rules in the ILP framework, we can create an input tensor $I^C\in \mathbb{N}^{R\times L\times m'}$, where $L$ is the number of literals and $m'$ is the length of these literals.

The NIN then uses a gated recurrent unit (GRU) on the one-hot encoding of these literals to create an embedding of the literals in $\mathbb{R}^d$.  The NIN then computes an attention vector over these featurized literals using a MLP, then finally applies the rule to the featurized literals weighted by attention using another RNN.\footnote{In full disclosure, the authors describe these mechanisms in notation-dense formulation but never define their notation, so some context has been inferred.}

The architecture of the system is depicted below.

\begin{figure}[H]
  \caption{NIN architecture}
  \centering
    \includegraphics[width=\textwidth]{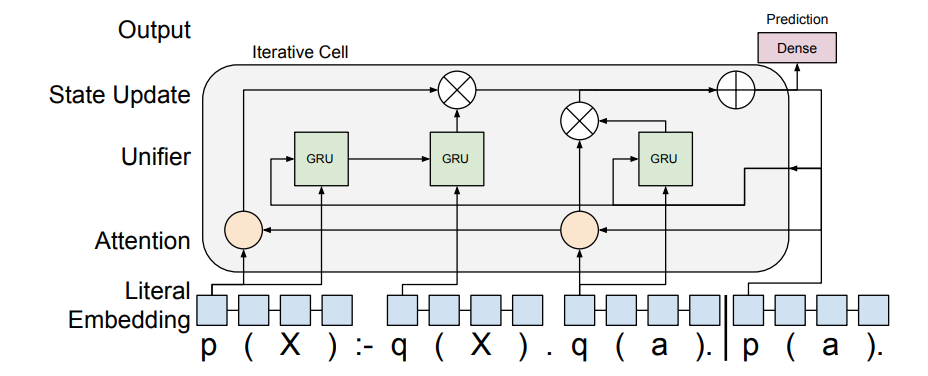}
\end{figure}

\paragraph{Experiment}

Cingillioglu et al. train over some basic logical inference tasks.
\begin{itemize}
    \item Facts (ground atoms with no variables)
    \item Unification (rules with empty bodies and an atom with a variable in the head)
    \item N-step deduction
    \item Logical AND
    \item Logical OR
    \item Transitivity
    \item N-step deduction with negation
    \item Logical AND with negation
    \item Logical OR with negation
\end{itemize}

\paragraph{Results}

The results are contrasted with the performance of a dynamic memory network (DMN), the state-of-the-art question-answering architecture.

We see that the NIN performs very well as the number of deduction steps increases (it was, after, architected for multi-hop reasoning).  However, as we say earlier with Valiant's Knowledge Infusion algorithm and with $\partial$ILP, the length of predicates is extremely problematic for inference.

\begin{figure}[H]
  \caption{NIN results}
  \centering
    \includegraphics[width=\textwidth]{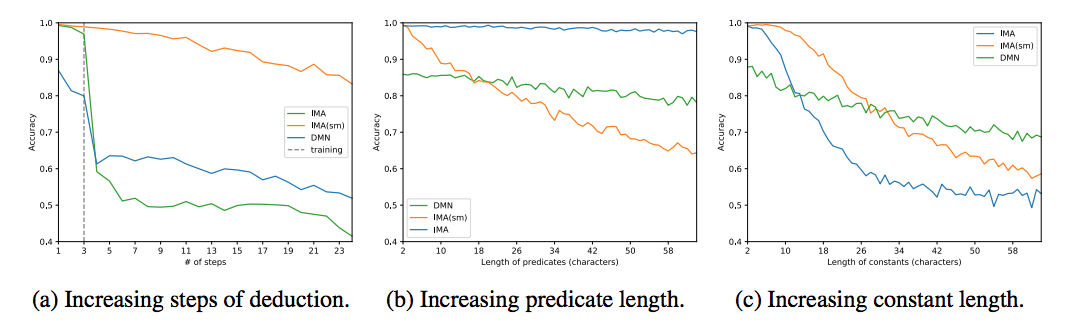}
\end{figure}

\section{Conclusions}
\subsection{Summary of Findings}

\begin{itemize}
    \item Learning without any ontology (in particular, if expressions are relaxed from booleans to arbitrary ontologies) is intractable. Negative result via conceptual graphs.  \cite{jappy-conceptual}

    \item Rules are PAC-learnable from scenes of the world given the structure of rules (the identities of the relation atoms in the predicates).  Positive result via robust logics. \cite{valiant-robust}
    
    \item The difficulty of learning rules grows \textbf{exponentially} with the maximum arity of the relation atoms.  Negative result due to robust logics, $\partial$ILP, and DeepLogic's NIN.  This problem looks to be especially difficult as the intractability was discovered by at least 3 independent sources from 3 different fields (PAC learning, ILP, and deep learning).  \cite{valiant-robust} \cite{evans-explanatory} \cite{cingillioglu-deeplogic}
    
    \item Given a learning oracle that provides counterexamples of incorrect rules, it is possible to learn exact-reasoning and PAC-reasoning algorithms for problems that are otherwise NP-hard. Positive result via the learning-to-reason framework. \cite{khardon-l2r}
    
    \item Learning many rules over an ontology is not much harder than learning a single rule over the same ontology. Positive result via knowledge infusion. \cite{valiant-infusion}
    
    \item Parallel learning via knowledge infusion over the robust logics framework yields successful applied results on natural language, especially when syntatical structures are provided in lieu of relations. Positive result via knowledge infusion. \cite{michael-infusion}
    
    \item Learning over a symbolic representation of data rather than raw information helps generalize to non-trivial distributions. Positive result via deep symbolic learning. \cite{garnelo-symbolic}
    
    \item Relaxing booleans from $\{0,1\}$ to the unit interval $[0,1]$ allows for a high-performing differentiable framework. Positive result via $\partial$ILP \cite{evans-explanatory}
    
    \item Featurizing literals (recognizing that objects and relations can be represented more meaningfully than a simple index) allows neural nets to reason effectively.  Positive result via DeepLogic's NIN. \cite{cingillioglu-deeplogic}
\end{itemize}

\subsection{Future Work}
\subsubsection{Integrated Models}

The most obvious way to progress from these findings is to recognize that many of these works have fortuitously designed their models as modular units.  Thus, a comprehensive end-to-end system can be constructed by stacking these modules.

\begin{table}[H]
\centering
\caption{Functional Reasoning Modules}
\label{my-label}
\begin{tabular}{|p{8cm}|p{6cm}|}
\hline
Module                                        & Implementations    
                        \\ \hline
Creating a Knowledge Base                            & Knowledge Infusion  \\ \hline
Extracting Symbols                            & Deep Symbolic Learning                                       \\ \hline
Featurizing Literals                          & DeepLogic                                                    \\ \hline
Reasoning from Literals without Rule Template & Learn-to-Reason, $\partial$ILP, DeepLogic                    \\ \hline
Reasoning from Literals with Rule Template    & Robust Logics, Knowledge Infusion (form given by POS tagger) \\ \hline
\end{tabular}
\end{table}

For example, consider the problem of reasoning about flying through an asteroid field.  We might stack these modules into system that uses Knowledge Infusion to generate a knowledge base (e.g. asteroids have momentum), Deep Symbolic Learning to extract symbols from video data (e.g. brown pixel $\implies$ asteroid), DeepLogic to featurize the literals (e.g. asteroids are bad), and $\partial$ILP to learn rules over the featurized literals (e.g. left-thrust to avoid asteroid from the right).

\subsubsection{Unsolved Problems}

\paragraph{Encoding Priors}
Many of these studies note (and rightly so) that a logical rule form allows one to encode priors.  However, none of these studies actually provide a methodology for doing so; this is non-trivial especially since the 2 applied differentiable reasoning systems (NIN and $\partial$ILP) have other parameters associated with rules (confidence, attention) that must be learned.

\paragraph{Flexible Object Ontology}
Similarly, many of these studies note (and rightly so) that a logical rule form makes transfer-learning a theoretical possibility.  However, all of the systems are hardcoded in terms of a fixed number of objects and relations.  Suppose we gave any of these systems a dataset to learn rules about fruits.  Suppose further that we then acquired data about legumes.  There would be no way to transfer-learn under any of the given systems -- robust logics requires the ontology in advance, and the deep systems are all hardcoded in input size.

\paragraph{Arbitrary Arity}
Of the 7 studies surveyed with direct positive results, 3 made no attempt to learn rules of arity greater than 2 (or any mention of such a possibility).  Two noted the 
intractability of such a problem (Valiant \cite{valiant-robust}, Evans \cite{evans-explanatory}), and one empirically tested and failed over longer rules (Cingillioglu \cite{cingillioglu-deeplogic}).  Finally, even given a Reasoning Oracle that provides counterexamples (not necessarily a feasibility in an applied setting), the Learn-to-Reason algorithm only provides positive results given a log-bound on the number of literals per clause.

The unscalability in arity is the only direct shortcoming that all of these studies have in common.

\subsubsection{Applications}

\paragraph{NLP} The most straightforward application seems to be natural language, which happens to be the domain Michael and Valiant chose for Knowledge Infusion.  This is because natural language is rich enough that learning dynamic rules is useful, while structured enough to provide a given ontology of objects and relations (nouns and verbs/proximity clauses).  Furthermore, many state-of-the-art dialogue systems in industry (such as Cognea, Watson Assistant, IPSoft, Google DialogFLow) have rules-based entity detection.  These rules provide a potential knowledge base of \textbf{priors} that can potentially be encoded into a bootstrapping system.

\paragraph{Econometric Causality} A significant field of study in econometrics is identifying \textbf{instrumental variables} (IVs) because a correlation from IVs can lead to a sound conclusion of causality.  Findings in AI reasoning systems can supplement IV research since the learned form (rules as implications) gives a direct representation of causality (and is robust to multi-hop reasoning, as shown by DeepLogic's NIN).

\paragraph{Ethics} Recent European legislation (in particular, the General Data Protection Regulation, GDPR) has made data transparency a worldwide priority.  Compliance with GDPR requires that companies be able to explain the behavior of their models in the context of protected classes.  While this may be straightforward in generalized linear models, this is highly non-trivial for black-box models (as most deep learning systems are).  AI reasoning systems can help sustain the high performance of deep learning systems while maintaining compliance with ethics and regulation.

\printbibliography

\end{document}